\def\@makefnmark{\hbox{\@textsuperscript{\normalfont\@thefnmark}}}}
\DeclarePairedDelimiter{\set}{\{}{\}}
\DeclarePairedDelimiter{\floor}{\lfloor}{\rfloor}
\newtheorem{dfn}{Definition}[section]
\newtheorem{lemma}[dfn]{Lemma}
\newtheorem{thm}[dfn]{Theorem}
\newtheorem{prop}[dfn]{Proposition}
\newtheorem{obs}[dfn]{Observation}
\newtheorem{cor}[dfn]{Corollary}
\title{Boolean Nearest Neighbor Language \\ in the Knowledge Compilation Map}
\author{Ond\v{r}ej \v{C}epek\thanks{Department of Theoretical Computer Science and Mathematical Logic, Charles University, Malostransk\'e n\'am. 25, 11800 Praha 1, Czech Republic. (ondrej.cepek@mff.cuni.cz)} and Jelena Gli\v{s}i\'{c}\thanks{Department of Applied Mathematics, Charles University, Malostransk\'e n\'am. 25, 11800 Praha 1, Czech Republic. (glisic@kam.mff.cuni.cz)}}
\begin{document}

\maketitle

\begin{abstract}
    The Boolean Nearest Neighbor (BNN) representation of Boolean functions was recently introduced by Hajnal, Liu and Tur\'{a}n~\cite{NNR}. A BNN representation of $f$ is a pair $(P,N)$ of sets of Boolean vectors (called positive and negative prototypes) where $f(x)=1$ for every positive prototype $x \in P$, $f(x)=0$ for all every negative prototype $x \in N$, and the value $f(x)$ for $x \not\in P \cup N$ is determined by the type of the closest prototype. The main aim of this paper is to determine the position of the BNN language in the Knowledge Compilation Map (KCM). To this end, we derive results that compare the succinctness of the BNN language to several standard languages from KCM and determine the complexity status of most standard queries and transformations for BNN inputs.
\end{abstract}

\section{Introduction}

Boolean functions constitute a fundamental concept in computer science, which often serves as the backbone of computation tasks and decision-making processes. Their significance extends across diverse domains including circuit design, artificial intelligence and cryptography. However, as the complexity of systems increases, the need for efficient representation and manipulation of Boolean functions becomes more and more important. There are many different ways in which a Boolean function may be represented. Common representations include truth tables (TT) (where a function value is explicitly given for every binary vector), list of models (MODS), i.e. a list of binary vectors on which the function evaluates to 1, various types of Boolean formulas (including CNF and DNF representations), various types of binary decision diagrams (BDDs, FBDDs, OBDDs), negational normal forms (NNF, DNNF, d-DNNF), and Boolean circuits.

The task of transforming one of the representations of a given function $f$ into another representation of $f$ (e.g. transforming a DNF representation into an OBDD or a DNNF into a CNF) is called knowledge compilation. This task emerged as an important ingredient of modern computation, aiming to transform Boolean functions into more compact and tractable forms. This allows us to use techniques from logic, algorithms, and data structures in order to bridge the gap between the expressive power of Boolean functions and the efficiency requirements of practical applications. For a comprehensive review paper on knowledge compilation see~\cite{KCM}, where the Knowledge Compilation Map (KCM) is introduced. KCM systematically investigates different knowledge representation languages with respect to (1) their relative succinctness, (2) the complexity of common transformations, and (3) the complexity of common queries. The \emph{succinctness} of representations roughly speaking describes how large the output representation in language $B$ is with respect to the size of the input representation in language $A$ when compiling from $A$ to $B$. A precise definition of this notion will be given later in this text. Transformations include negation, conjunction, disjunction, conditioning, and forgetting. The complexity of such transformations may differ dramatically from trivial to computationally hard depending on the chosen representation language. The same is true for queries such as consistency check, validity check, clausal and sentential entailment, equivalence check, model counting, and model enumeration. 

The number of knowledge representation languages in the Knowledge Compilation Map gradually increases. In~\cite{pbc} the authors included Pseudo-Boolean constraint (PBC) and Cardinality constraint (CARD) languages into KCM by showing succinctness relations among PBC, CARD, and languages already in the map, and by proving the complexity status of all queries and transformations introduced in~\cite{KCM}. The same goal was achieved for the switch-list (SL) language in~\cite{SL}. In this paper, we aim at solving exactly the same task for Boolean Nearest Neighbor (BNN) representations introduced recently  in~\cite{NNR}. Let us denote $\mathcal{B} = \{0,1\}$ and $d_H$ the Hamming distance on $\mathcal{B}^n$.  

\begin{dfn}[\cite{NNR}]
    A \textup{Boolean nearest neighbor (BNN)} representation of a Boolean function $f$ in $n$ variables is a pair of disjoint subsets $(P,N)$ of $\mathcal{B}^n$ (called \textup{positive and negative prototypes}) such that for every $a\in\mathcal{B}^n$
    \begin{itemize}
        \item if $f(a)=1$ then there exists $b\in P$ such that for every $c\in N$ it holds that $d_H(a,b)<d_H(a,c)$,
        \item if $f(a)=0$ then there exists $b\in N$ such that for every $c\in P$ it holds that $d_H(a,b)<d_H(a,c)$.
    \end{itemize}
    The Boolean nearest neighbor complexity of $f$, $BNN(f)$, is the minimum of the sizes of the BNN representations of $f$.
\end{dfn}

%For a Boolean function $f$ on $n$ variables its BNN representation is a pair $(P,N)$ where $P,N \subseteq \{0,1\}^n$ are two sets of Boolean vectors (called positive and negative prototypes) such that for every $x \in \{0,1\}^n$ the function value $f(x)$ is determined by the type of the closest prototype, that is $f(x)=1$ if the closest prototype is $P$ and $f(x)=0$ if the closest prototype is $N$. Distance between two vectors is their Hamming distance and ties are not allowed in the sense that for every $x$ all prototypes closest to $x$ must be of the same type, i.e. either all are in $P$ or all are in $N$. 

BNN representations of Boolean functions were introduced in~\cite{NNR} as a special case of a more general Nearest Neighbor (NN) representations where the prototypes are not restricted to Boolean vectors but instead can be any real-valued vectors from $\mathbb{R}^n$. In this case, Euclidean distance $d_E$ is used instead of the Hamming distance $d_H$. Notice that there is in fact no difference between using Euclidean and Hamming distance when considering Boolean prototypes. This is due to the fact that for any two vectors $x,y\in\mathcal{B}^n$, $d_E(x,y)=\sqrt{d_H(x,y)}$. Thus any BNN representation of a function $f$, is also an NN representation of $f$. NN representations of Boolean functions were already studied more than thirty years ago~\cite{Wilfong91}, but there are also many recent works~\cite{BBC18,GKN18,KSB23}.

NN representations of Boolean functions are a special case of an even more general concept of NN classification problems in which prototypes can be of several (in general more than two) classes. Such classification problems were studied in the area of machine learning, learning theory, and computational geometry~\cite{LB04,KAM11,AR18}. In certain contexts, the term "anchor" is used instead of "prototype". When dealing with a nearest neighbor representations, the objective is usually to minimize the number of prototypes. 

Another way of looking at BNN representations is to view them as a special case of partially defined Boolean functions (pdBf)~\cite{CHI88,BIM98} (see also Chapter 12 in~\cite{CramaHammer} for a review of pdBf literature), with an explicit way in which the function is extended to all unclassified vectors. 

In the wide range of different knowledge representation languages BNN representations belong to the group of languages based on lists of Boolean vectors. Other languages from this group are TT (full truth table), MODS (list of models), IR (interval representations)~\cite{SGZ05,CKK08} and SLR (switch list representations)~\cite{CH17,SL}.
As we shall see in the current paper, within this group the BNN language supports the least number of standard queries and transformations.

\section{Definitions and Recent Results}

Throughout this paper, we make use of the following notation and notions:
\begin{itemize}
    \item the symbol $\mathcal{B}$ denotes the set $\set{0,1}$ and $\mathcal{B}^n$ denotes the Boolean hypercube;
    \item by $d_H(x,y)$ for $x\in\mathcal{B}^n$ we denote the Hamming distance in the Boolean hypercube, i.e. the number of coordinates in which $x$ and $y$ differ;
    \item  by $d_H(x,\mathcal{A})$ for $x,y\in\mathcal{B}^n$ and $\mathcal{A}\subseteq\mathcal{B}^n$ we denote $\min\set{d_H(x,y)\mid y\in\mathcal{A}}$;
    \item by $|x|$ for $x\in\mathcal{B}^n$ we denote the weight of $x$, i.e. the number of coordinates of $x$ which are equal to $1$;
\end{itemize}

\begin{dfn}
    A \textup{Boolean function} in $n$ variables is a function $\mathcal{B}^n\to\mathcal{B}$. We say that $x\in\mathcal{B}^n$ is \textup{positive} vector or a \textup{model} of $f$ (resp. \textup{negative} vector or a \textup{non-model} of $f$) if $f(x)=1$ (resp. $f(x)=0$).
\end{dfn}

\begin{dfn}
    We call a Boolean function $f$ in $n$ variables a \textup{symmetric function} if there exists a set $I_f\subseteq\set{0,1,\dots,n}$ such that $f(x)=1$ if and only if $|x|\in I_f$.
\end{dfn}

\begin{dfn}
    The symmetric Boolean function in $n$ variables with $I_f=\set{i \text{ is odd}\mid1\le i\le n}$ is called the \textup{parity function} and we will denote it by $PAR_n$.
\end{dfn}

\begin{dfn}
    The symmetric Boolean function in $n$ variables with $I_f=\set{i\mid i\ge t}$ for $1\le t\le n$ is called the \textup{threshold function} with unit weights and threshold $t$ and we will denote it by $TH_n^t$.
\end{dfn}

\begin{dfn}
    The Boolean function $TH_n^{n/2}$ is called the \textup{majority function} and we will denote it by $MAJ_n$.
\end{dfn}

\begin{dfn}
    The \textup{Boolean hypercube graph} $\mathcal{B}_n = (V,E)$ is an undirected graph with vertex set $V=\mathcal{B}^n$ and edge set $E = \{(x,y) \mid d_H(x,y)=1 \}$. Finally, for a set of Boolean vectors $S \subseteq \mathcal{B}^n$ we define its neighborhood by
    \[
    \delta(S)=\set{x\in\mathcal{B}^n\mid d_H(x,S)=1}
    \]
\end{dfn}

In the following sections, we will often use the terms Boolean hypercube (denoted by $\mathcal{B}^n$) and Boolean hypercube graph (denoted by $\mathcal{B}_n$) interchangeably. In particular, a neighbor of a vector $x$ in the Boolean hypercube will be any vector that is adjacent to it in the Boolean hypercube graph. Now we present two recent results and two easy corollaries which we will frequently use in the rest of this paper.

\begin{thm}[\cite{NNR}]
\label{thm: bnn}
    The following statements hold:
    \begin{enumerate}[label=(\alph*)]
        \item $BNN(PAR_n)=2^n$.
        \item If $n$ is odd then $BNN(MAJ_n) = 2$ and if $n$ is even then $BNN(MAJ_n) \le \frac{n}{2}+2$.
        \item $BNN(TH_n^{\lceil n/3\rceil})=2^{\Omega(n)}$
    \end{enumerate}
\end{thm}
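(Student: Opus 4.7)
The plan for (a): the upper bound is trivial by taking $P$ to be all models and $N$ all non-models. For the lower bound I would show that every model must lie in $P$; the statement for $N$ follows by the same argument with roles swapped. Suppose for contradiction that a model $x_0 \notin P$, so there exists $b_0 \in P$ with $d_H(x_0, b_0) = d < d_H(x_0, c)$ for every $c \in N$. Since $b_0 \in P$ is a model of $PAR_n$ (odd weight), $x_0$ and $b_0$ share the weight parity, so $d$ is even and hence $d \geq 2$. Flipping one bit of $x_0$ toward $b_0$ yields a non-model $x_1$ at distance $1$ from $x_0$ and $d-1$ from $b_0$; since $d \geq 2 > 1$, $x_1$ cannot lie in $N$, so the BNN property applied to $x_1$ provides $b_1 \in N$ with $d_H(x_1, b_1) < d-1$, and this distance is again forced to be even, hence $\leq d-2$. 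The triangle inequality then gives $d_H(x_0, b_1) \leq 1 + (d-2) = d-1 < d$, contradicting the minimality of $b_0$ with respect to $N$.

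For (b), the odd case follows by a direct computation: with $P=\{1^n\}$ and $N=\{0^n\}$ every $x$ satisfies $d_H(x,1^n)=n-|x|$ and $d_H(x,0^n)=|x|$, and for odd $n$ these two integers have opposite parities and are never equal, so each $x$ is correctly classified by the closer of the two prototypes. The even case is delicate only at the weight-$n/2$ layer, where $1^n$ and $0^n$ tie. Here I would keep the pair $\{1^n,0^n\}$ and adjoin $n/2$ auxiliary prototypes chosen to tilt the tie layer uniformly to the positive side while leaving the straightforward classification on the other layers untouched; correctness is then a routine case analysis on $|x|$, with the creative step being the explicit description of the auxiliary prototypes.

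Part (c) is the genuine obstacle, since it asks for an exponential lower bound. My first attempt would be a covering/pigeonhole argument: the weight-$t$ model layer has size $\binom{n}{\lceil n/3\rceil}=2^{\Omega(n)}$, so if each positive prototype can be the strictly closest member of $P$ (strictly closer than every element of $N$) for only $2^{o(n)}$ weight-$t$ models, the bound follows. Bounding this "Voronoi coverage" of a single $p\in P$ would rely on the fact that the nearest negative prototype to $p$ cannot be too far -- otherwise low-weight non-models get misclassified -- which caps the effective Hamming radius within which $p$ wins against $N$, and hence the number of weight-$t$ vectors it can cover via a Hamming-ball volume estimate. The hard part is calibrating the quantitative interplay between the weights and mutual distances of prototypes in $P$ and $N$ so that the two constraints together force an exponential rate; I suspect a more refined invariant, perhaps tracking asymmetric distances from a prototype to each level set $\{x:|x|=k\}$ rather than a single radius, will be needed to make the argument tight.
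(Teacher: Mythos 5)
This theorem is imported from~\cite{NNR}; the paper states it without proof, so there is no internal argument to compare yours against, and your attempt has to be judged on its own. Your part (a) is a complete and correct proof: positive prototypes are necessarily models (a prototype is at distance $0$ from itself, so nothing can beat it), the parity identity $d_H(x,y)\equiv|x|+|y|\pmod 2$ forces the witness distance $d=d_H(x_0,b_0)$ to be even and hence at least $2$, the intermediate vector $x_1$ cannot lie in $N$ because it sits at distance $1<d$ from $x_0$, and the triangle inequality then manufactures a negative prototype at distance at most $d-1$ from $x_0$, contradicting the choice of $b_0$. The odd case of (b) is likewise correct: $|x|$ and $n-|x|$ sum to an odd number, so they never tie and the closer of $1^n,0^n$ always has the right sign.

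The genuine gaps are in (b) for even $n$ and in (c). For $n=2k$, your plan of keeping both $1^n$ and $0^n$ and adjoining $n/2$ auxiliaries does not close as stated. The natural auxiliaries are weight-$(2k-1)$ vectors $p^{(j)}$ (a single zero at position $j$), and $d_H(x,p^{(j)})=2k-|x|-1$ if $x_j=0$ and $2k-|x|+1$ otherwise; a middle-layer vector has only $k$ zeros, so $k$ auxiliaries with distinct zero positions miss the vector whose ones occupy exactly those positions, leaving it tied between $1^n$ and $0^n$. You need $k+1$ such auxiliaries, and keeping $1^n$ as well gives $k+3>\frac{n}{2}+2$ prototypes. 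The construction that meets the bound (and which the paper reuses in its proof of the EQ hardness result) discards $1^n$ entirely: $P$ consists of $k+1$ weight-$(2k-1)$ vectors with distinct zero positions and $N=\{0^{2k}\}$, and the two-value formula for $d_H(x,p^{(j)})$ verifies every layer. For (c) you offer only a strategy: the pigeonhole framing is reasonable, but the entire mathematical content of the statement is the quantitative bound on how many weight-$\lceil n/3\rceil$ models a single positive prototype can win against all of $N$, and you explicitly leave that calibration unresolved. As written, (c) is not proved, and since it is precisely the part the paper relies on (for Theorems~\ref{thm: cd} and~\ref{thm: fo} and Proposition~\ref{thm: obdd}), the attempt cannot stand as a proof of the full theorem.
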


\begin{lemma}[\cite{nnc}]\label{lem: comps}
     Let $f$ be a Boolean function on $n$ variables. If the subgraph of $\mathcal{B}_n$ induced by the models of $f$ has $m$ connected components then any BNN representation of $f$ has at least one positive prototype in each connected component (and hence at least $m$ positive prototypes in total).
\end{lemma}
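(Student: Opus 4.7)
The plan is to argue by contradiction: assume some connected component $C$ of the model-subgraph of $\mathcal{B}_n$ contains no positive prototype, and derive a violation of the BNN closeness condition at some vertex of $C$.

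First I would record the almost-trivial observation that every $b \in P$ is itself a model of $f$. Indeed, applying the BNN condition at $a = b$ under the hypothesis $f(b) = 0$ would demand a negative prototype strictly closer to $b$ than $b$ is to itself, which is impossible. Consequently $P$ is contained in the set of models of $f$, and under our assumption every positive prototype lies in some connected component other than $C$.

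Next, pick any $a \in C$; by the BNN condition there is a prototype $b \in P$ with $d_H(a, b) < d_H(a, c)$ for every $c \in N$. Since $b$ lies in a different component of models, any shortest Hamming path from $a$ to $b$ must leave $C$ at some step. Walking along such a path starting at $a$, each successive vertex stays either within $C$ or enters a non-model of $f$---it cannot land in another component of models, since that vertex would be adjacent to its predecessor in $C$ and hence in the same component as it. Hence there is an edge $(x, y)$ on the shortest path with $x \in C$ and $f(y) = 0$, and if $k = d_H(a, x)$ then along this shortest path $d_H(a, y) = k+1$ while $d_H(y, b) = d_H(a, b) - k - 1$.

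Finally, apply the BNN condition at the non-model $y$: it produces some $c \in N$ with $d_H(y, c) < d_H(y, b)$. The triangle inequality on the Hamming metric then gives $d_H(a, c) \le d_H(a, y) + d_H(y, c) < (k+1) + (d_H(a, b) - k - 1) = d_H(a, b)$, contradicting the choice of $b$. The main obstacle is conceptually minor---it is just the structural fact that the first vertex where a shortest path leaves $C$ must be a non-model rather than a model of another component; once this is noticed, a single triangle inequality closes the argument, and the promised lower bound of $m$ positive prototypes follows by applying the conclusion to each of the $m$ components separately.
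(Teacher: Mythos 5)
Your proof is correct. Note that the paper itself does not prove this lemma---it is imported from the reference cited in its header---so there is no in-paper argument to compare against. Your argument is the natural one and is complete: positive prototypes are models (else the BNN condition at the prototype itself would require a negative prototype at distance less than zero), a shortest hypercube path from $a\in C$ to its witnessing prototype $b\notin C$ must first exit the model set at some non-model $y$ adjacent to $C$, and the triangle inequality through $y$ and the negative prototype witnessing $f(y)=0$ contradicts the minimality condition defining $b$. The same shortest-path-through-the-boundary idea is in fact used elsewhere in the paper, in the proof that \textbf{BNN} $\le$ \textbf{MODS}.
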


Clearly, a similar lemma holds by a symmetric argument also for non-models.

\begin{cor}\label{cor: neg}
     Let $f$ be a Boolean function on $n$ variables. If the subgraph of $\mathcal{B}_n$ induced by the non-models of $f$ has $m$ connected components then any BNN representation of $f$ has at least one negative prototype in each connected component (and hence at least $m$ negative prototypes in total).
\end{cor}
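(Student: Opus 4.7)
The plan is to deduce the corollary directly from Lemma~\ref{lem: comps} by passing to the complementary function $\bar f$, defined by $\bar f(x) = 1 - f(x)$.

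The key observation is that the BNN definition is symmetric in the roles of positive and negative prototypes: if $(P,N)$ is a BNN representation of $f$, then $(N,P)$ is a BNN representation of $\bar f$. Indeed, reading the two bullets of the definition for $\bar f$ at a point $a \in \mathcal{B}^n$: the case $\bar f(a)=1$ is the case $f(a)=0$, which by the second bullet for $f$ supplies some $b \in N$ closer to $a$ than any element of $P$, precisely what the first bullet for $\bar f$ requires from the representation $(N,P)$; the other case is analogous.

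Now the non-models of $f$ are exactly the models of $\bar f$, so the subgraph of $\mathcal{B}_n$ induced by the non-models of $f$ coincides with the subgraph induced by the models of $\bar f$, and therefore has the same $m$ connected components. Applying Lemma~\ref{lem: comps} to $\bar f$ and the representation $(N,P)$, we conclude that $N$ must contain at least one element in each of these $m$ components, which is exactly the statement of the corollary.

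There is no real obstacle here; the only thing to verify carefully is the symmetry swap, which is immediate from the form of the definition. No additional machinery is required beyond Lemma~\ref{lem: comps}.
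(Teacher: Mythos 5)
Your proof is correct and is exactly the symmetry argument the paper itself invokes (the paper merely states ``a similar lemma holds by a symmetric argument''); you have simply spelled out the swap $(P,N)\mapsto(N,P)$, $f\mapsto\bar f$ explicitly and correctly. Nothing further is needed.
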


Another easy consequence of the previous two lemmas concerns isolated vectors. Vector $x \in \mathcal{B}^n$ is called \textup{isolated} for $f$ if all its neighbors have the opposite function value (i.e. $x$ represents a connected component of size one in the corresponding subgraph).

\begin{cor}\label{cor: pos}
    Let $f$ be a Boolean function and let $x$ be a positive (resp. negative) vector such that $x$ is isolated for $f$. Then $x$ must be a positive (resp. negative) prototype in any BNN representation of $f$.
\end{cor}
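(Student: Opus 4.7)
The plan is to observe that this corollary is an immediate specialization of Lemma \ref{lem: comps} and Corollary \ref{cor: neg} to the case when the relevant connected component has size exactly one. So the proof will essentially be a short unpacking of definitions followed by invocation of the appropriate earlier lemma.

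First I would handle the positive case. Suppose $f(x)=1$ and $x$ is isolated for $f$. By definition of ``isolated'', every neighbor $y$ of $x$ in $\mathcal{B}_n$ satisfies $f(y)=0$, so $y$ is not a model of $f$. Consequently, inside the subgraph of $\mathcal{B}_n$ induced by the models of $f$, the vertex $x$ has no adjacent vertices, and thus $\{x\}$ forms a connected component of size one. Lemma \ref{lem: comps} then forces any BNN representation $(P,N)$ of $f$ to place a positive prototype inside this component, and since the component contains only $x$, we must have $x \in P$.

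The negative case is completely symmetric: if $f(x)=0$ and $x$ is isolated, then every neighbor of $x$ is a model, so $\{x\}$ is a singleton connected component of the subgraph of $\mathcal{B}_n$ induced by the non-models of $f$, and Corollary \ref{cor: neg} forces $x \in N$.

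There is no real obstacle to overcome here; the only thing to be careful about is to explicitly check that the definition of ``isolated for $f$'' coincides with being a singleton connected component of the relevant (model or non-model) induced subgraph of $\mathcal{B}_n$, which is immediate from the definitions.
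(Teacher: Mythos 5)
Your proof is correct and matches the paper's intended argument exactly: the paper presents this corollary as an immediate consequence of Lemma~\ref{lem: comps} and Corollary~\ref{cor: neg}, noting that an isolated vector is precisely a singleton connected component of the relevant induced subgraph. Nothing is missing.
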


\section{Succinctness}
In this section, we establish the position of the \textbf{BNN} language in the succinctness diagram presented in KCM~\cite{KCM}. Let us start with a formal definition of this notion. 

\begin{dfn}
     Let $L_1$ and $L_2$ be two knowledge representation languages. We say that $L_1$ is \textup{at least as succinct as} $L_2$, denoted $L_1 \leq L_2$, if and only if there exists a polynomial $p$ such that for every sentence $\alpha\in L_2$, there exists an equivalent sentence $\beta\in L_1$ where $|\beta| \leq p(|\alpha|)$. Furthermore, $L_1$ is \textup{strictly more succinct than} $L_2$, denoted $L_1<L_2$, if $L_1\le L_2$ but $L_2\not\le L_1$.
\end{dfn}

\begin{figure}[h]
            \centering            \includegraphics[width=0.6\linewidth]{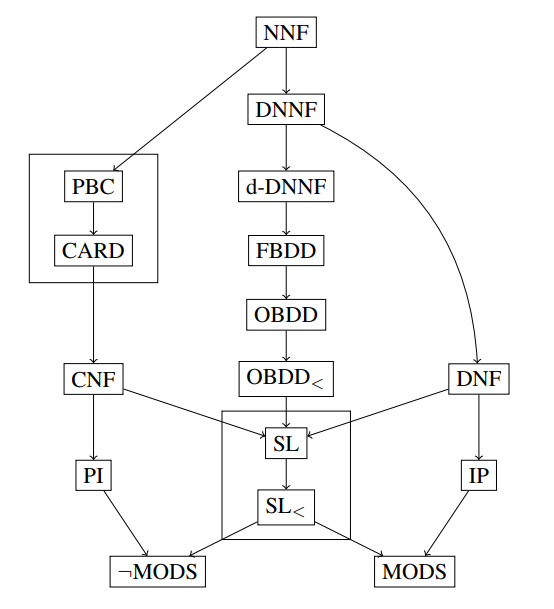}
            \caption{Diagram representing known succinctness results. The languages included are from \cite{KCM}, \cite{pbc} and \cite{SL}. An edge $L_1\to L_2$ indicates that $L_1$ is strictly more succinct than $L_2$.}.
            \label{fig: succ_old}
\end{figure}

We show a diagram representing the known succinctness results in Figure \ref{fig: succ_old}. Our goal is to prove the seven relations that connect \textbf{BNN} with standard KCM languages in the diagram in Figure~\ref{fig: succ}.

\begin{figure}[h]
    \centering
    \includegraphics[width=0.55\linewidth]{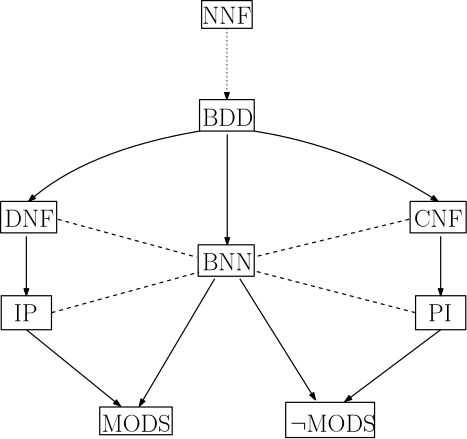}
    \caption{A solid directed edge from $L_1$ to $L_2$ indicates that $L_1$ is strictly more succinct than $L_2$. A dotted directed edge from $L_1$ to $L_2$ indicates that $L_1$ is at least as succinct as $L_2$. A dashed undirected edge between $L_1$ and $L_2$ means that $L_1$ and $L_2$ are incomparable.}
    \label{fig: succ}
\end{figure}

It should be noted that the \textbf{BDD} language is missing in the succinctness diagram in Figure \ref{fig: succ_old}, so we should also argue about the relations that concern \textbf{BDD}. The relation \textbf{NNF} $\leq$ \textbf{BDD} is trivial since
\textbf{BDD} is equivalent to a strict subset of \textbf{d-NNF}~\cite{KCM} which is in turn by definition a strict subset of \textbf{NNF}. To show the remaining two relations 
\textbf{BDD} $<$ \textbf{CNF} and \textbf{BDD} $<$ \textbf{DNF} we utilize the fact that any Boolean formula with negations at the variables, which uses only conjunction and disjunction as connectives, can be compiled into a BDD with only a linear blowup~\cite{wegBook}, and hence both \textbf{BDD} $\leq$ \textbf{CNF} and \textbf{BDD} $\leq$ \textbf{DNF} hold. This, combined with the fact that \textbf{CNF} and \textbf{DNF} are incomparable, easily yields that both inequalities are strict. We begin our study of the relations concerning the \textbf{BNN} language by showing that \textbf{BNN} is strictly less succinct than \textbf{BDD}. 

\begin{thm}\label{thm: bdd}
        \textbf{BDD} $<$ \textbf{BNN}.
\end{thm}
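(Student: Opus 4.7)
The plan is to establish \textbf{BDD}~$<$~\textbf{BNN} by separately proving the two ingredients of the definition: \textbf{BDD}~$\le$~\textbf{BNN} (every BNN compiles into a polynomial-size BDD) and \textbf{BNN}~$\not\le$~\textbf{BDD} (some family has polynomial-size BDDs but superpolynomial BNN size).

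The strict separation \textbf{BNN}~$\not\le$~\textbf{BDD} is immediate from Theorem~\ref{thm: bnn}(a). The parity function $PAR_n$ admits a well-known OBDD with $O(n)$ nodes, hence also a BDD with $O(n)$ nodes (since every OBDD is a BDD), while $BNN(PAR_n)=2^n$. Thus $PAR_n$ witnesses that no polynomial bound on BNN size in terms of BDD size can exist.

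For \textbf{BDD}~$\le$~\textbf{BNN}, I would give an explicit polynomial-size branching program that simulates the natural evaluation algorithm for a BNN. Given $(P,N)$ with $s=|P|+|N|$ prototypes over $n$ variables, the BDD iterates through the prototypes $p_1,\dots,p_s$ one at a time; in the $i$-th iteration it reads all $n$ input variables in order and accumulates the running Hamming distance $d_H(x,p_i)$, and between iterations it retains only the smallest distance observed so far together with the class (positive or negative) of the prototype attaining it. After the final iteration, the output is determined by the stored class. The state of the program at any moment is captured by the tuple (current prototype index, current position within the current pass, partial distance to $p_i$ accumulated so far, best minimum distance so far, class of the current best), whose coordinates range over sets of sizes $O(s)$, $O(n)$, $O(n)$, $O(n)$, and $O(1)$ respectively, giving a total of $O(sn^3)$ nodes --- polynomial in the representation size $\Theta(sn)$.

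The main point to justify is that rereading of variables is permitted in the \textbf{BDD} language of KCM, which is the convention already adopted by the paper in its preceding discussion relating \textbf{BDD} to \textbf{CNF} and \textbf{DNF}. This rereading is essential: a naive single-pass construction that tracks the partial distances to all $s$ prototypes simultaneously would have $(n+1)^s$ states, exponential in $s$, so processing one prototype at a time is exactly what keeps the final BDD size polynomial.
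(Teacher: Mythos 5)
Your proposal is correct, and while it follows the same overall decomposition as the paper (compile any BNN into a polynomial-size BDD, then separate via $PAR_n$ using Theorem~\ref{thm: bnn}(a) and its $O(n)$-node OBDD), the compilation itself is a genuinely different construction. The paper builds, for each pair of prototypes $p^i,p^j$, an $O(n^2)$-node gadget that decides which of the two is closer to the input, and wires $\binom{k}{2}$ such gadgets into a triangular tournament whose winner's class is output, for a total of $O(k^2n^2)$ nodes; you instead run a single sequential minimum-tracking branching program whose state records the current prototype, the position within the current pass, the partial distance to that prototype, and the best distance and class seen so far, giving $O(sn^3)$ nodes. Both are polynomial in $|\alpha|=\Theta(sn)$ (yours is smaller when $s\gg n$, the paper's when $n\gg s$), both rely on the same convention that \textbf{BDD} permits repeated tests of a variable along a path, and your closing remark correctly identifies why rereading is what makes either construction work. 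The one point you should make explicit is tie-breaking: when a later prototype attains exactly the current best distance, your program keeps the earlier one, and this arbitrary choice is sound only because the definition of a BNN representation forces all prototypes at the minimum distance from any given input to be of the same class --- the paper states this justification explicitly for its gadgets, and your argument needs the same sentence. With that added, your proof is complete.
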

\begin{proof}
For the non-strict relation \textbf{BDD} $\leq$ \textbf{BNN} it suffices to show that there exists a polynomial $p$ such that for every sentence $\alpha\in\mathbf{BNN}$, there exists an equivalent sentence $\beta\in\mathbf{BDD}$ where $|\beta|\leq p(|\alpha|)$.
For any $\alpha=(P,N)$, we will construct such a binary decision diagram $\beta$. Let us assume that $P\cup N=\{p^1,\dots,p^k\}$ and let $x\in\mathcal{B}^n$. We build $\beta$ in two steps:
\begin{enumerate}
    \item We construct a gadget which for two fixed prototypes $p^i$ and $p^j$ decides which one is closer to input $x$.
    \item We put a number of these gadgets together so that the prototype closest to $x$ is found and its value outputted.
\end{enumerate}
    
We begin by building the BDD gadget. For a fixed $p^i,p^j\in P\cup N$, we construct a diagram $G_{i,j}$, as shown in Figure \ref{fig: gadget} for $n=3$. The gadget first compares each coordinate of $x$ with the corresponding coordinate of $p^i$. Thus the $n+1$ nodes on level $n+1$ of the gadget reflect the value $d_H(x,p^i)$ from $d_H(x,p^i)=0$ on the right (in this case $x=p^i$) to $d_H(x,p^i)=n$ on the left (both vectors differ in every coordinate). 

\begin{figure}[ht]
    \centering
    \includegraphics[width=0.6\linewidth]{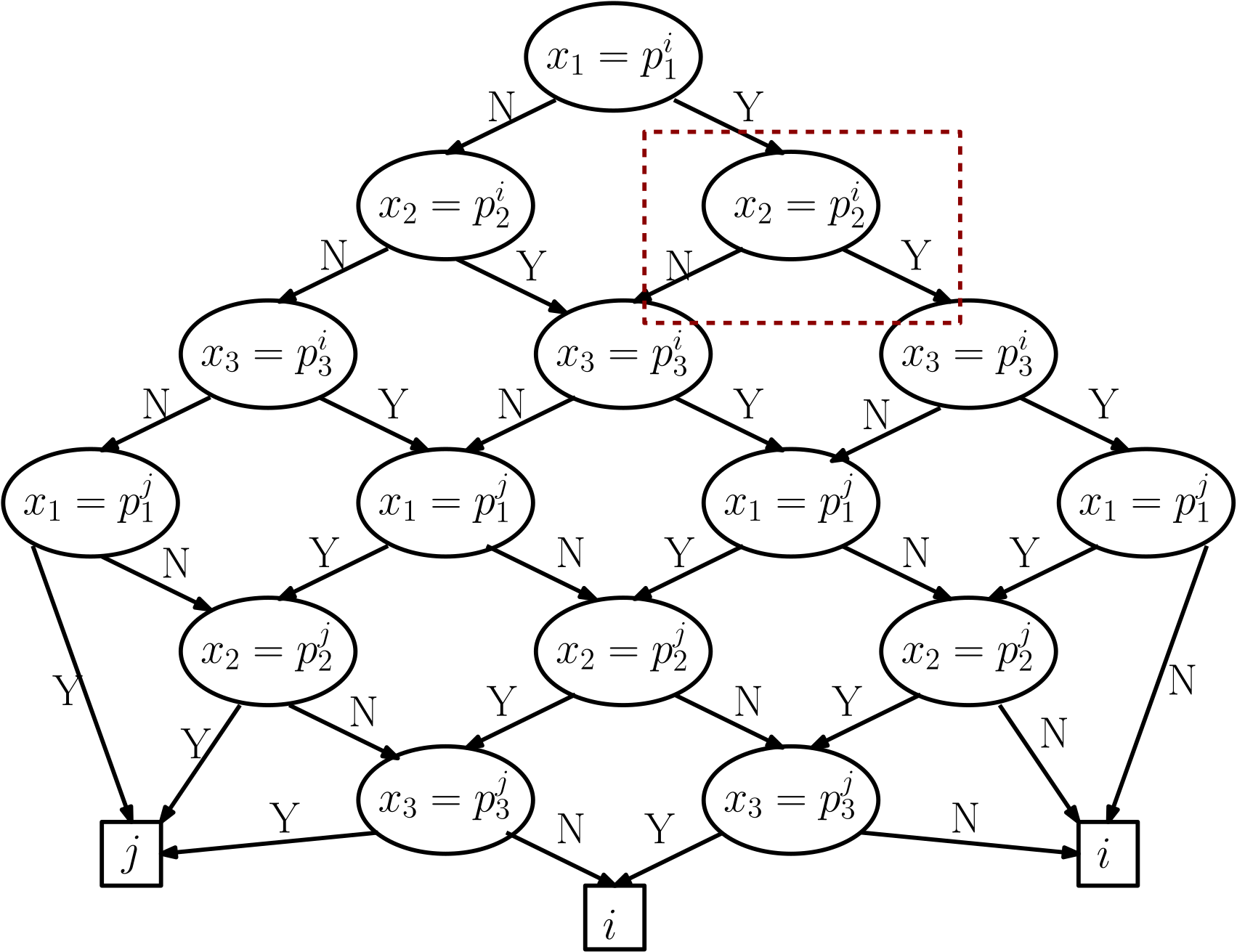}
    \caption{Gadget $G_{i,j}$ for $n=3$. Each comparison node can be replaced by a decision node, as shown in Figure \ref{fig: gadget_eq}.}
    \label{fig: gadget}
\end{figure}
   
Consider for an example the gadget in Figure \ref{fig: gadget} with $x=(1,0,1)$ and $p^i=(0,0,1)$. In the first three coordinated comparisons, we take the edges labeled $N$, $Y$ and $Y$. As only one coordinate differs, on level $4$ we know that the value of $d_H(x,p^i)$ is $1$. 
   
In the next $n$ levels (starting at level $n+1$), the gadget sequentially compares coordinates of $x$ with the corresponding coordinates of $p^j$. The comparisons stop as soon as it is decided which of $d_H(x,p^i)$ and $d_H(x,p^j)$ is smaller (e.g. $d_H(x,p^i)=n$ and $x_1 = p_1^j$ already implies $d_H(x,p^i) > d_H(x,p^j)$ without testing the remaining coordinates). As soon as the gadget determines which of the two prototypes $p^i,p^j$ is closer, it points either to the root node of a next gadget $G_{i,k}$ (if $p^i$ is closer to $x$) or $G_{j,k}$ (if $p^j$ is closer to $x$) for some $k$, or to one of the terminals $0,1$ if the closest prototype was already determined. In Figure \ref{fig: gadget} the corresponding directed edges go to the index of the closer prototype. The gadgets can break ties $d_H(x,p^i) = d_H(x,p^j)$ arbitrarily (tie is achieved at the two edges in the center of the bottom level), since the aim is to find one of the nearest prototypes, and those must all necessarily belong to the same class, by the definition of BNN.
   
Continuing with the example above, consider $p^j=(0,0,0)$. Then the edges visited in the bottom half of the gadget have labels $N$, $Y$ and $N$, we have $d_H(x,p^j)=2$, and the gadget outputs the index  $i$ of the prototype closer to $x$.

Since our aim is to construct a BDD, we must somehow convert the comparison nodes in the gadget to standard decision nodes. However, this is of course easy. Each gadget is defined for fixed prototypes, so there is an obvious way how to convert the Y and N labels on the outgoing edges from a $x_k = p_k^i$ node into $0$ and $1$ labels from a decision node on variable $x_k$, as depicted in Figure \ref{fig: gadget_eq} for $k=2$.

\begin{figure}[ht]
    \centering
    \includegraphics[scale=0.425]{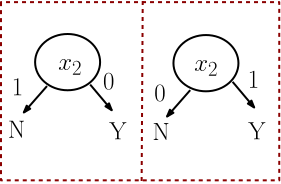}
    \caption{A decision node that corresponds to the comparison node $x_2 = p_2^i$ if $p^i_2=0$ (left) and if $p^i_2=1$ (right).}
    \label{fig: gadget_eq}
\end{figure}

    It now remains to build the output BDD $\beta$ using the gadgets. We can sequentially test pairs of prototypes until the closest one is found. We do so by making a triangle-shaped acyclic directed graph made of gadgets with $k-1$ levels. At level $i$, prototype $p^{i+1}$ is compared with every $p^j$ for $j \leq i$, and directed edges are pointed to appropriate gadgets on the next level. After level $k-1$, a closest prototype has been determined, and so the directed edges going out of these gadgets point to terminal $1$ (resp. $0$) depending on whether the found closest prototype is positive (resp. negative). The output BDD $\beta$ for a function with $k$ prototypes is shown in Figure \ref{fig: bdd}.
    \begin{figure}[h]
        \centering
        \includegraphics[width=0.6\linewidth]{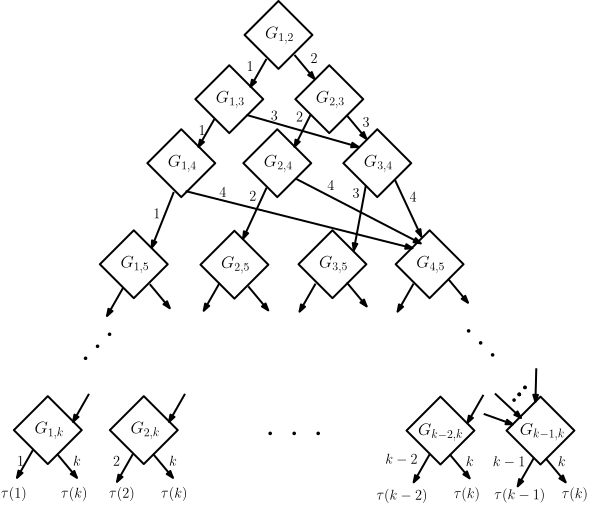}
        \caption{BDD of a function with $k$ prototypes, where $\tau(i)=1$ if prototype $p^i$ is positive and $\tau(i)=0$ if prototype $p^i$ is negative.}
        \label{fig: bdd}
    \end{figure}
        
    We now show that the constructed BDD $\beta$ is of polynomial size with respect to $|\alpha|=kn$. For vectors of length $n$, each gadget consists of $(n+1)^2-1=O(n^2)$ decision nodes. As there are $k$ prototypes in $\alpha$, BDD $\beta$ consists of $\frac{1}{2}(k-1)k=O(k^2)$ gadgets. Hence the size of the constructed BDD $\beta$ is $O(n^2k^2)=O(|\alpha|^2)$, as desired.

    This finishes the proof of \textbf{BDD} $\leq$ \textbf{BNN}.    
    To show that the inequality is strict, it suffices to consider the parity function $f$ on $n$ variables. It has a unique BNN representation with $2^n$ prototypes by Theorem \ref{thm: bnn}. On the other hand, it is well-known~\cite{wegBook} that $f$ admits a BDD representation of size $O(n)$.
\end{proof}

We now continue with proving the remaining two strict inequalities in Figure~\ref{fig: succ} which tie the \textbf{BNN} language to languages  \textbf{MODS} and $\mathbf{\lnot}$\textbf{MODS}. 

\begin{prop}
        \textbf{BNN} $\mathbf{<}$ \textbf{MODS}.
    \end{prop}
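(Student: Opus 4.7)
The plan is to prove the two halves \textbf{BNN} $\le$ \textbf{MODS} and \textbf{MODS} $\not\le$ \textbf{BNN} separately.

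For the polynomial simulation \textbf{BNN} $\le$ \textbf{MODS}, starting from a MODS sentence given as the explicit list $M = \{m^1,\dots,m^k\}$ of all models of $f$, I will set $P := M$ and $N := \delta(M)\setminus M$, the collection of non-models that are Hamming-adjacent to some model. Since each model has exactly $n$ neighbors in the hypercube, $|P|+|N| \le (n+1)k$, and the resulting BNN representation has size $O(n^2 k)$, polynomial in $|\alpha| = nk$.

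To verify that $(P,N)$ represents $f$, for any model $m \in M$ the prototype $b := m \in P$ satisfies $d_H(m,b) = 0$ while $d_H(m,c) \ge 1$ for every $c \in N$ (since $P$ and $N$ are disjoint by construction), so the first bullet of the BNN definition holds. For a non-model $x$, set $d := d_H(x,M) \ge 1$ and pick any $m^* \in M$ with $d_H(x,m^*) = d$; walking one step from $m^*$ along a shortest path toward $x$ produces a vector $y$ with $d_H(y,m^*) = 1$ and $d_H(y,x) = d-1$. The vector $y$ cannot itself be a model, since otherwise $d_H(x,M) \le d-1$ would contradict the choice of $d$; hence $y \in \delta(M)\setminus M = N$. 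Then $d_H(x,c) \ge d > d-1 \ge d_H(x,y)$ for every $c \in P$, verifying the second bullet.

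For the strict direction \textbf{MODS} $\not\le$ \textbf{BNN}, the witness I would use is the majority function on an odd number of variables: by Theorem~\ref{thm: bnn}(b), $BNN(MAJ_n) = 2$, so $MAJ_n$ admits a BNN representation of total size $O(n)$, whereas $MAJ_n$ has $\binom{n}{\lceil n/2\rceil} = 2^{\Omega(n)}$ models, forcing every MODS representation to have size exponential in $n$. I do not foresee a real obstacle; the one creative ingredient is the choice $N := \delta(M)\setminus M$, and the ``one step toward $x$'' argument for non-models is the heart of the verification.
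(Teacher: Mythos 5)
Your proof is correct and takes essentially the same approach as the paper: the identical construction $(P,N)=(M,\delta(M))$ (your $\delta(M)\setminus M$ coincides with the paper's $\delta(M)$, which by its definition already excludes $M$), verified by the same shortest-path/one-step argument for non-models. The only cosmetic difference is the witness for strictness --- the paper uses the constant-$1$ function (a single positive prototype versus $2^n$ models), while you use odd majority via Theorem~\ref{thm: bnn}(b); both work.
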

    \begin{proof}
        We first show that \textbf{BNN} $\mathbf{\leq}$ \textbf{MODS}. Consider a Boolean function $f$ with a set of models $M \subseteq \mathcal{B}^n$ where $|M|=m$. Define $$(P,N)=(M,\delta(M)).$$ We claim that $(P,N)$ is a BNN representation of $f$ with size polynomial in $mn$, which is the number of bits to store $M$.
        
        Since all models of $f$ are prototypes in $P$ it suffices to check non-models to verify that $(P,N)$ indeed represents $f$. Let $x$ be an arbitrary vector such that $f(x)=0$. If $x\in N$, then $x$ is surely classified correctly. Consider $x\not\in N$ (which implies $d_H(x,M)\ge2$) and let $z \in P$ be a positive prototype closest to $x$. Clearly, any shortest path from $x$ to $z$ must pass through some $y \in \delta(M)$ and thus through a negative prototype which is closer to $x$ than $z$ which means that $x$ is classified correctly.
        
        In the worst case, all $n$ neighbors of every model need to be picked as negative prototypes. Thus the constructed BNN $(P,N)$ has at most $m+nm$ prototypes (with $n(m+nm)$) bits) which finishes the proof of \textbf{BNN} $\mathbf{\leq}$ \textbf{MODS}.

        To see that the inequality is strict it suffices to consider 
        the constant $1$ function on $n$-dimensional vectors which has $2^n$ models but can be represented by a single positive prototype (and no negative ones).
    \end{proof}

    \begin{cor}
        \textbf{BNN} $\mathbf{<}$ $\mathbf{\lnot}$\textbf{MODS}.
    \end{cor}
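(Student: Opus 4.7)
The plan is to exploit the obvious symmetry of the \textbf{BNN} language under negation: if $(P,N)$ is a BNN representation of $f$, then $(N,P)$ is a BNN representation of $\lnot f$, and both have the same size. Analogously, a $\lnot$\textbf{MODS} representation of $f$ is by definition a \textbf{MODS} representation of $\lnot f$ of the same size. Combining these two symmetries reduces the non-strict inequality \textbf{BNN} $\le$ $\lnot$\textbf{MODS} directly to the previously established \textbf{BNN} $\le$ \textbf{MODS}.

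More concretely, I would argue as follows. Given $\alpha \in \lnot$\textbf{MODS} representing $f$ by the list $\bar M$ of its non-models, view $\alpha$ as a \textbf{MODS} representation of $\lnot f$. By the preceding proposition, there is a BNN representation $(P',N')$ of $\lnot f$ whose size is polynomial in $|\alpha|$ (explicitly, one can take $(P',N')=(\bar M,\delta(\bar M))$). Then $(N',P')=(\delta(\bar M),\bar M)$ is a BNN representation of $f$ of the same polynomially bounded size, which establishes \textbf{BNN} $\le$ $\lnot$\textbf{MODS}.

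For strictness I would use the function dual to the witness in the previous proof: the constant $0$ function on $n$ variables. Its $\lnot$\textbf{MODS} representation has size $\Theta(n\cdot 2^n)$ since it lists all $2^n$ vectors as non-models, while it admits a trivial BNN representation with a single negative prototype (any vector in $\mathcal B^n$) and no positive prototypes. Hence $\lnot$\textbf{MODS} $\not\le$ \textbf{BNN}, and the inequality is strict.

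There is no real obstacle here; the whole argument is essentially a two-line reduction to the preceding proposition using the involutive symmetry $(P,N)\leftrightarrow(N,P)$ of \textbf{BNN} together with $f\leftrightarrow \lnot f$ that swaps \textbf{MODS} and $\lnot$\textbf{MODS}. The only thing worth stating explicitly is that these symmetries preserve representation sizes exactly, so the polynomial bound carries over verbatim.
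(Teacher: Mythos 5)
Your proof is correct and follows essentially the same route as the paper, which simply invokes the argument for \textbf{BNN} $<$ \textbf{MODS} ``where the constants $0$ and $1$ exchange their roles''; you have merely made the duality explicit via the $(P,N)\leftrightarrow(N,P)$ negation symmetry. The explicit construction $(\delta(\bar M),\bar M)$ and the constant-$0$ witness for strictness are exactly the duals of what the paper uses.
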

    \begin{proof}
        The corollary follows by an argument analogous to the one for \textbf{MODS} where the constants $0$ and $1$ exchange their roles.
    \end{proof}
    
Now we are ready to show that \textbf{BNN} is incomparable to both \textbf{CNF} and \textbf{DNF}. 

\begin{lemma}\label{bnn_cnf}
    \textbf{BNN} $\mathbf{\not\leq}$ \textbf{CNF}.
\end{lemma}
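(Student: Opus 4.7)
The plan is to exhibit a family of Boolean functions admitting polynomial-size CNFs but whose every BNN representation is forced to be exponentially large. The candidate I have in mind is the equality function $EQ_n$ on $2n$ variables $x_1,\dots,x_n,y_1,\dots,y_n$, defined by $EQ_n(x,y)=1$ if and only if $x_i=y_i$ for every $i\in\{1,\dots,n\}$.

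First I would record the standard CNF representation
\[
EQ_n \;=\; \bigwedge_{i=1}^{n}(\bar x_i \vee y_i) \wedge (x_i \vee \bar y_i),
\]
which consists of $2n$ clauses of width two and therefore has total size $O(n)$.

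The heart of the argument is the BNN lower bound. The models of $EQ_n$ are exactly the $2^n$ ``diagonal'' vectors $(v,v)$ with $v\in\mathcal{B}^n$. Flipping any single coordinate of such a diagonal vector breaks the equality in that coordinate, so every model $(v,v)$ has only non-model neighbors in $\mathcal{B}_{2n}$; in other words, every model of $EQ_n$ is isolated in the sense defined before Corollary~\ref{cor: pos}. That corollary then forces each of these $2^n$ isolated positive vectors to appear as a positive prototype in every BNN representation, and hence $BNN(EQ_n)\ge 2^n$. Since the CNF has size $O(n)$ while any equivalent BNN has bit-size at least $2n\cdot 2^n$, no polynomial $p$ can satisfy $|\beta|\le p(|\alpha|)$ for all equivalent pairs $\alpha\in\mathbf{CNF}$, $\beta\in\mathbf{BNN}$, which is exactly $\mathbf{BNN}\not\le\mathbf{CNF}$.

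The only creative step is picking the right function: we need both a compact CNF and many isolated models, and $EQ_n$ fits both requirements cleanly (any monotone symmetric candidate such as a threshold function fails the CNF side, and parity fails it as well). The remaining verifications (CNF size, enumeration of models, isolation check) are all one-line arguments, so I do not foresee a substantial technical obstacle beyond recognizing the right witness.
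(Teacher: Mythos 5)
Your proposal is correct and follows essentially the same route as the paper: the paper uses $f_n=\bigwedge_{i=1}^n(x_i\oplus y_i)$, which is your $EQ_n$ up to negating the $y$-variables, and both arguments combine an $O(n)$-size two-clause-per-coordinate CNF with the observation that all $2^n$ models are isolated, so Corollary~\ref{cor: pos} forces exponentially many positive prototypes.
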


\begin{proof}
    It suffices to show that there is a Boolean function with CNF representation of size polynomial in the number of variables, which cannot be represented by a BNN of polynomial size. We will prove that the family of functions defined by 
    \[
    f_n = \bigwedge_{i=1}^n (x_i \oplus y_i)
    \]
    (where each $f_n$ is a function in $2n$ variables $\{x_1, \ldots ,x_n, y_1, \ldots ,y_n\}$ and $\oplus$ is the XOR operator) has the desired property. Clearly, each $f_n$ has a CNF representation $F_n$ of size $O(n)$
    \[
    F_n = \bigwedge_{i=1}^n (x_i \vee y_i) \wedge (\lnot x_i \vee \lnot y_i).
    \]
    On the other hand, $f_n$ has $2^n$ models (for each $i$ we can decide whether to satisfy $x_i$ or $y_i$) and every model of $f_n$ is isolated (flipping a single bit falsifies the corresponding XOR). Therefore any BNN representation of $f_n$ has at least $2^n$ prototypes by Corollary~\ref{cor: pos}. It should be noted that this construction is a special case of the more general construction in~\cite{nnc} (Theorem 11).     
\end{proof}
\begin{lemma}\label{cnf_bnn}
    \textbf{CNF} $\mathbf{\not\leq}$ \textbf{BNN}.
\end{lemma}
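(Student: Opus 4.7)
The plan is to exhibit a family of Boolean functions admitting a polynomial-size BNN representation but only exponential-size CNF representations. I would use the majority function $MAJ_n$ for odd $n$. By Theorem~\ref{thm: bnn}(b), $BNN(MAJ_n) = 2$; explicitly, the pair $(P, N) = (\{(1,\dots,1)\}, \{(0,\dots,0)\})$ works because for every $x \in \mathcal{B}^n$ we have $d_H(x, (1,\dots,1)) = n - |x|$ and $d_H(x, (0,\dots,0)) = |x|$, and for odd $n$ the unique smaller value is $n - |x|$ iff $|x| > n/2$. So the BNN representation has bit size $O(n)$, and it remains to show that every CNF for $MAJ_n$ has $2^{\Omega(n)}$ clauses.

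For the CNF lower bound I would rely on three standard facts. First, since $MAJ_n$ is monotone, any CNF for it can be turned, without increasing size, into one in which every clause is a disjunction of positive literals: deleting a negative literal $\neg x_j$ from a clause $C = (\neg x_j) \vee C'$ preserves the implicate property, because any assignment $y$ falsifying $C'$ with $y_j = 0$ lifts to $y' \geq y$ with $y'_j = 1$ falsifying $C$, so $f(y') = 0$, and monotonicity then forces $f(y) = 0$. Second, in a positive-clause CNF for a monotone function every prime implicate must appear as a clause: the prime implicates of a monotone function correspond bijectively to its maximal non-models $m$ via $m \mapsto \bigvee_{i:\, m_i = 0} x_i$, and any positive clause of the CNF falsified by a given maximal non-model $m$ must have variable set contained in the zero-set of $m$, with strict containment ruled out by maximality (a strict superset of these zeros would be a vector $m' > m$, hence a model, so the clause would fail to be an implicate). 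Third, the prime implicates of $MAJ_n$ for odd $n$ are exactly the $\binom{n}{(n+1)/2}$ positive clauses of length $(n+1)/2$, one for each maximal non-model of weight $(n-1)/2$.

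Combining the three facts yields a CNF lower bound of $\binom{n}{(n+1)/2} = 2^{\Omega(n)}$ clauses, which together with the $O(n)$ BNN size establishes \textbf{CNF} $\not\leq$ \textbf{BNN}. The only delicate step is the monotone-reduction argument; the other two are routine structural properties of monotone Boolean functions and a direct count of prime implicates.
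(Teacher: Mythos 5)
Your proposal is correct and follows essentially the same route as the paper: it uses the majority function, whose BNN size is $O(n)$ by Theorem~\ref{thm: bnn}, and lower-bounds its CNF size by the number of maximal non-models (which the paper cites from~\cite{CramaHammer} and you prove from scratch via the positive-clause reduction). The extra detail you supply for the monotone CNF lower bound is sound but not a different argument.
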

\begin{proof}
    Consider the majority function $g_n$ on $n$ variables which can be represented by a BNN of size $O(n)$ by Theorem \ref{thm: bnn}. On the other hand, any CNF of this monotone function must contain at least as many clauses as is the number of maximal false vectors of $g_n$~\cite{CramaHammer} which is of course exponential in $n$, and so the claim follows.
\end{proof}

\begin{thm}
    \textbf{BNN} is incomparable with \textbf{CNF}.
\end{thm}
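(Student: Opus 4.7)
The plan is to observe that this theorem follows immediately by combining the two preceding lemmas. Recall that two languages $L_1$ and $L_2$ are defined to be incomparable precisely when $L_1 \not\leq L_2$ and $L_2 \not\leq L_1$. Thus I would simply cite Lemma~\ref{bnn_cnf}, which exhibits the family $f_n = \bigwedge_{i=1}^n (x_i \oplus y_i)$ with polynomial-size CNF representation but requiring $2^n$ prototypes in any BNN, and Lemma~\ref{cnf_bnn}, which points to the majority function $MAJ_n$ admitting a BNN of size $O(n)$ by Theorem~\ref{thm: bnn} while requiring exponentially many clauses in any CNF.

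Since the two lemmas jointly establish both non-inclusions, there is no additional argument required and no real obstacle: the witnesses have already been produced. The only thing worth emphasizing is that the witness families for the two directions are genuinely different functions (the paired-XOR conjunction on one side, the majority function on the other), which rules out any single construction witnessing both directions simultaneously and confirms that neither language polynomially simulates the other.

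In summary, the proof I would write is essentially one sentence: by Lemma~\ref{bnn_cnf} we have \textbf{BNN} $\not\leq$ \textbf{CNF}, by Lemma~\ref{cnf_bnn} we have \textbf{CNF} $\not\leq$ \textbf{BNN}, and hence by the definition of incomparability the two languages are incomparable.
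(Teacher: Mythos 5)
Your proposal is correct and is exactly the paper's proof: the theorem is obtained by combining Lemma~\ref{bnn_cnf} and Lemma~\ref{cnf_bnn}, which together give both non-inclusions required by the definition of incomparability. No further comment is needed.
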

\begin{proof}
    We combine Lemma \ref{bnn_cnf} and Lemma \ref{cnf_bnn} and obtain the result.
\end{proof}

\begin{cor}
\textbf{BNN} is incomparable with \textbf{DNF}.
\end{cor}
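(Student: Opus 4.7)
The plan is to establish the corollary by proving its two halves \textbf{BNN} $\not\leq$ \textbf{DNF} and \textbf{DNF} $\not\leq$ \textbf{BNN}, which together yield incomparability. Both halves are direct duals of Lemmas \ref{bnn_cnf} and \ref{cnf_bnn} via De Morgan's laws and the symmetry between positive and negative prototypes, so no new machinery is needed.

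For \textbf{BNN} $\not\leq$ \textbf{DNF}, I would take the negation of the witness used in Lemma \ref{bnn_cnf}, namely
\[
\bar{f}_n = \bigvee_{i=1}^n \lnot (x_i \oplus y_i) = \bigvee_{i=1}^n \bigl( (x_i \wedge y_i) \vee (\lnot x_i \wedge \lnot y_i) \bigr),
\]
which has an explicit DNF of size $O(n)$ in $2n$ variables. The non-models of $\bar{f}_n$ coincide with the models of $f_n$ from Lemma \ref{bnn_cnf}, and each of those is isolated in the Boolean hypercube (flipping any single bit produces a satisfied $x_i \oplus y_i$ clause, changing the value). Since there are $2^n$ such isolated non-models, Corollary \ref{cor: pos} applied to negative vectors forces every BNN representation of $\bar{f}_n$ to contain all of them as negative prototypes, giving an exponential lower bound.

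For \textbf{DNF} $\not\leq$ \textbf{BNN}, I would reuse the majority function $g_n = MAJ_n$ (with $n$ odd) employed in Lemma \ref{cnf_bnn}. By Theorem \ref{thm: bnn}(b), $g_n$ admits a BNN representation of size $2$. On the other hand, $g_n$ is monotone, so by the standard fact used in the proof of Lemma \ref{cnf_bnn}, any DNF representation must contain at least one term for each minimal true vector; these are exactly the vectors of weight $\lceil n/2 \rceil$, of which there are $\binom{n}{\lceil n/2 \rceil} = 2^{\Omega(n)}$. Hence no polynomial-size DNF for $g_n$ exists.

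The argument is entirely routine; the only point that needs care is citing Corollary \ref{cor: pos} in its negative-vector form and noting that the minimal-true-vector lower bound for DNFs of monotone functions is exactly the dual of the maximal-false-vector lower bound for CNFs invoked in Lemma \ref{cnf_bnn}. Combining the two halves gives the corollary.
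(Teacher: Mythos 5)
Your proposal is correct and follows essentially the same route as the paper: negate $f_n$ from Lemma~\ref{bnn_cnf} to get a linear-size DNF whose non-models are all isolated (forcing exponentially many negative prototypes via Corollary~\ref{cor: pos}), and use the majority function with the minimal-true-vector lower bound for monotone DNFs in the other direction. The only nit is a wording slip: flipping a bit in a non-model of $\bar{f}_n$ \emph{falsifies} one $x_i \oplus y_i$ (equivalently, satisfies one term $\lnot(x_i \oplus y_i)$), which is what changes the function value.
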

\begin{proof}
    To show that \textbf{BNN} $\mathbf{\not\leq}$ \textbf{DNF} consider the function $\lnot f_n$ where $f_n$ is as defined in the proof of Lemma \ref{bnn_cnf}. After an application of DeMorgan laws, we obtain a DNF formula of linear size representing the function $\lnot f_n$. However, we may repeat the argument from Lemma \ref{bnn_cnf} for the negative vectors of $\lnot f$ which are all isolated and conclude that an exponential number of negative prototypes is required. To show that \textbf{DNF} $\mathbf{\not\leq}$ \textbf{BNN} it again suffices to consider the majority function. Any DNF of this monotone function must contain at least as many clauses as is the number of minimal true vectors~\cite{CramaHammer} which is of course again exponential in $n$.
\end{proof}

The last two relations for \textbf{BNN} already follow for free. 

\begin{cor}
    \textbf{BNN} is incomparable with both \textbf{IP} and \textbf{PI}.
\end{cor}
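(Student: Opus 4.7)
The plan is to split the incomparability into the four needed non-relations (\textbf{IP} $\not\leq$ \textbf{BNN}, \textbf{PI} $\not\leq$ \textbf{BNN}, \textbf{BNN} $\not\leq$ \textbf{IP}, \textbf{BNN} $\not\leq$ \textbf{PI}) and handle two of them by transitivity while reusing the witnesses from the preceding results for the other two.

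For \textbf{IP} $\not\leq$ \textbf{BNN} and \textbf{PI} $\not\leq$ \textbf{BNN}, I would appeal to a simple chain argument. Every formula in \textbf{IP} is syntactically a \textbf{DNF} formula, which immediately yields \textbf{DNF} $\leq$ \textbf{IP}; by the same token \textbf{CNF} $\leq$ \textbf{PI}. Suppose for contradiction that \textbf{IP} $\leq$ \textbf{BNN}. Then for every $\alpha \in \mathbf{BNN}$ there is an equivalent $\beta \in \mathbf{IP}$ of polynomial size, but $\beta$ is also a \textbf{DNF} formula, giving \textbf{DNF} $\leq$ \textbf{BNN}, which contradicts the theorem on the incomparability of \textbf{BNN} and \textbf{DNF}. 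The argument for \textbf{PI} is identical with \textbf{CNF} replacing \textbf{DNF}.

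For \textbf{BNN} $\not\leq$ \textbf{IP} and \textbf{BNN} $\not\leq$ \textbf{PI}, I would reuse the same families $f_n$ and $\lnot f_n$ used in the proofs above. The key observation is that the small \textbf{CNF} of $f_n$ is already a \textbf{PI} representation, and the \textbf{DNF} obtained from $\lnot f_n$ by DeMorgan is already an \textbf{IP} representation. Concretely, for $F_n = \bigwedge_{i=1}^n (x_i \vee y_i)(\lnot x_i \vee \lnot y_i)$ each clause is a prime implicate of $f_n$: every clause is clearly an implicate, and removing either literal produces a single-literal clause that is falsified by a model of $f_n$ (choose the other variable of the pair to satisfy $x_i \oplus y_i$), so no literal can be dropped. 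Hence $\textrm{PI}(f_n) = O(n)$ while $\textrm{BNN}(f_n) \geq 2^n$ by the isolated-models argument from the proof of Lemma~\ref{bnn_cnf}. Symmetrically, in $\bigvee_{i=1}^n ((x_i \wedge y_i) \vee (\lnot x_i \wedge \lnot y_i))$ each term forces some $x_i = y_i$ and hence is an implicant of $\lnot f_n$, and dropping either literal from $x_i y_i$ (or $\lnot x_i \lnot y_i$) leaves a single-literal term that is no longer an implicant; so this is a valid \textbf{IP} representation of $\lnot f_n$ of size $O(n)$, whereas Corollary~\ref{cor: pos} applied to the $2^n$ isolated negative vectors of $\lnot f_n$ forces $\textrm{BNN}(\lnot f_n) \geq 2^n$.

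There is no real obstacle here; the only thing to be careful about is the mild primality check in the last paragraph, which justifies the author's remark that these two relations come "for free" once the \textbf{CNF}/\textbf{DNF} incomparability results are in hand.
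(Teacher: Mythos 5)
Your decomposition and witnesses are essentially the paper's: you get $\mathbf{PI}\not\leq\mathbf{BNN}$ and $\mathbf{IP}\not\leq\mathbf{BNN}$ from the syntactic inclusions $\mathbf{PI}\subseteq\mathbf{CNF}$ and $\mathbf{IP}\subseteq\mathbf{DNF}$ together with Lemma~\ref{cnf_bnn} and its DNF analogue, and you get $\mathbf{BNN}\not\leq\mathbf{PI}$ and $\mathbf{BNN}\not\leq\mathbf{IP}$ from $F_n$ and its DeMorgan dual combined with Corollary~\ref{cor: pos}. The one genuine gap is in your verification that $F_n$ is a legal \textbf{PI} sentence (and dually that the DNF of $\lnot f_n$ is a legal \textbf{IP} sentence). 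Under the Darwiche--Marquis definition, a \textbf{PI} sentence must not only consist of prime implicates but must also subsume every implicate of the function, i.e.\ it must contain \emph{all} prime implicates; your ``mild primality check'' establishes only that each listed clause is prime, not that none are missing. The claim is still true, and the paper closes exactly this hole with the observation you omit: no two clauses of $F_n$ are non-trivially resolvable (the only pairs sharing a variable are $(x_i\vee y_i)$ and $(\lnot x_i\vee\lnot y_i)$, which clash in two variables, so every resolvent is a tautology), hence the resolution closure of $F_n$ is $F_n$ itself and $F_n$ already lists all prime implicates. Alternatively, you can argue directly that any non-tautological clause containing neither $\{x_i,y_i\}$ nor $\{\lnot x_i,\lnot y_i\}$ for any $i$ is falsified by some model of $f_n$ and hence is not an implicate at all. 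Either way, you need to add this completeness step, and its dual for the implicant side, before declaring your witnesses to be \textbf{PI}/\textbf{IP} sentences; once that is done your argument coincides with the paper's.
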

\begin{proof}
    Consider the function $f_n$ and its CNF $F_n$ from the proof of Lemma \ref{bnn_cnf}. Notice that not two clauses in $F_n$ are resolvable. Hence, $F_n$ is the IP representation of $f_n$ (contains exactly all prime implicates of $f_n$), and \textbf{BNN} $\mathbf{\not\leq}$ \textbf{PI} follows. The opposite relation \textbf{PI} $\mathbf{\not\leq}$ \textbf{BNN} follows from the fact that \textbf{CNN} $\mathbf{\not\leq}$ \textbf{BNN} because the \textbf{PI} language is a subset of the \textbf{CNF} language.

    By symmetric arguments, incomparability to \textbf{BNN} can be shown also for the \textbf{IP} language.
\end{proof}

We finish this section by a succinctness relation which does not appear in Figure~\ref{fig: succ} because it is just one half of an incomparability result between \textbf{BNN} and the \textbf{OBDD} language. 

\begin{prop}\label{thm: obdd}
    \textbf{BNN} $\mathbf{\not\leq}$ \textbf{OBDD}. 
\end{prop}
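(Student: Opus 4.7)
The plan is to exhibit a family of Boolean functions that admit polynomial-size OBDD representations but whose BNN complexity grows exponentially. The obvious candidate is the parity function $PAR_n$, since Theorem~\ref{thm: bnn}(a) already gives us $BNN(PAR_n) = 2^n$, which is the strongest possible lower bound. What remains is to argue that $PAR_n$ has an OBDD representation of size polynomial (in fact linear) in $n$.

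First I would fix any variable ordering, say $x_1, x_2, \dots, x_n$, and describe the canonical OBDD computing parity: at level $i$ the diagram has two nodes, one for each possible value of the running XOR $x_1 \oplus \cdots \oplus x_{i-1}$, and the decision on $x_i$ simply toggles between these two states. The two nodes at level $n+1$ are the terminals $0$ and $1$. This yields an OBDD with $O(n)$ nodes, which is a textbook construction and can be cited from~\cite{wegBook}.

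Combining the two ingredients, $PAR_n$ has an OBDD representation of size $O(n)$ while every BNN representation of $PAR_n$ has size $2^n$ (equivalently $n \cdot 2^n$ bits when accounting for vector length). Hence no polynomial $p$ can satisfy the succinctness definition, giving \textbf{BNN} $\not\leq$ \textbf{OBDD}.

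There is no real obstacle here: both facts are already in hand (one from Theorem~\ref{thm: bnn}(a), the other a classical observation about OBDDs for parity). The only care needed is to state the comparison at the right granularity — vectors versus nodes — so that the $2^n$ lower bound on the number of prototypes translates into a genuinely exponential blowup in bit size relative to the $O(n)$-size OBDD.
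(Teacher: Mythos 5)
Your proof is correct and follows the same strategy as the paper's: exhibit a function with a polynomial-size OBDD but exponential BNN complexity, citing Theorem~\ref{thm: bnn} for the lower bound and \cite{wegBook} for the OBDD upper bound. The only difference is the witness function --- you use $PAR_n$ via Theorem~\ref{thm: bnn}(a), while the paper uses the threshold function $TH_n^{\lfloor n/3\rfloor}$ via Theorem~\ref{thm: bnn}(c) together with the fact that every symmetric function has a small OBDD; both choices work equally well, and your $2^n$ bound is in fact the tightest possible.
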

\begin{proof}
    It suffices to show that there is a Boolean function in $n$ variables with OBDD representation of polynomial size in $n$ which cannot be represented by a BNN of polynomial size in $n$. This property is satisfied by the threshold function $TH_n^{\floor{n/3}}$ which is true if and only if at least one third of its inputs are set to one. This is a symmetric Boolean function and it is long known that every such function can be represented by an OBDD of polynomial size in the number of variables~\cite{wegBook}. On the other hand, $TH_n^{\floor{n/3}}$ can only be represented by a BNN of exponential size in $n$ by Theorem~\ref{thm: bnn}.
\end{proof}

Obviously, the above result also implies \textbf{BNN} $\mathbf{\not\leq}$ \textbf{FBDD} since the \textbf{OBDD} language is a subset of the \textbf{FBDD} language. We conjecture that also \textbf{FBDD} $\mathbf{\not\leq}$ \textbf{BNN} holds (which would imply \textbf{OBDD} $\mathbf{\not\leq}$ \textbf{BNN}) but finding a family of functions necessary for such a statement remains an open problem.

\section{Transformations}

In this section we shall show that the \textbf{BNN} language unfortunately does not support any standard transformation from~\cite{KCM} except of negation, which is a trivial transformation for \textbf{BNN}, and singleton forgetting, for which the complexity status remains open.

\begin{obs}\label{obs: neg}
        \textbf{BNN} supports \textbf{}$\mathbf{\lnot}$\textbf{C}.
\end{obs}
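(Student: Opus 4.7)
The plan is to exhibit an explicit transformation that, given a BNN representation $(P,N)$ of a function $f$, produces in linear time a BNN representation of $\neg f$ whose size is at most the size of the input. The obvious candidate is simply the swapped pair $(N,P)$, so the whole argument reduces to verifying that this swap indeed encodes $\neg f$ and noting that the size bound is trivial.

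Concretely, I would first recall the definition of a BNN representation and observe that the two clauses characterising when $f(a)=1$ and when $f(a)=0$ are completely symmetric in the roles played by positive and negative prototypes: in both cases, one requires that there exist a prototype of the appropriate class strictly closer to $a$ (in Hamming distance) than any prototype of the opposite class. Consequently, if $(P,N)$ satisfies these two conditions for $f$, then reading the same two conditions with $P$ and $N$ interchanged immediately yields the defining conditions of a BNN representation of the function $g$ with $g(a)=1$ whenever the original $f(a)=0$ and $g(a)=0$ whenever $f(a)=1$, i.e. of $g=\neg f$.

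Once this correctness check is in place, the complexity claim is immediate: producing $(N,P)$ from $(P,N)$ consists of relabelling the two sets (or equivalently swapping two pointers), and the output has exactly the same number of prototypes, each of the same length, as the input. Thus the transformation runs in linear time and satisfies the polynomial-size requirement of the KCM framework for $\neg\mathbf{C}$. There is no real obstacle here — the observation is essentially syntactic — the only thing worth emphasising in the write-up is the symmetry of the two bullet points in the definition of BNN, which is what makes the swap work without any further modification.
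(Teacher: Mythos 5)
Your proposal is correct and matches the paper's proof: both observe that by the symmetry of the two clauses in the definition of a BNN representation, $(P,N)$ represents $f$ if and only if $(N,P)$ represents $\lnot f$, and that the swap is a trivial polynomial-time operation. No gaps.
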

\begin{proof}
    Since no ties are allowed, i.e. for every $x \in \mathcal{B}^n$ all prototypes nearest to $x$ must be of the same type, then it follows that $f$ is represented by BNN $(P,N)$ if and only if $\lnot f$ is represented by BNN $(N,P)$. If $(P,N)$ is a well defined BNN then so is $(N,P)$ and it can be of course obtained from $(P,N)$ in polynomial time.
\end{proof}

The fact that \textbf{BNN} does not support \textbf{CD} and \textbf{FO} can be proved using the exponential lower bound for threshold functions from Theorem \ref{thm: bnn}.

\begin{thm}\label{thm: cd}
    \textbf{BNN} does not support \textbf{CD}.
\end{thm}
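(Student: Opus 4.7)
The plan is to leverage the gap between the two bounds in Theorem~\ref{thm: bnn}: an odd-arity majority admits a BNN with only two prototypes, whereas the threshold function $TH_n^{\lceil n/3\rceil}$ requires $2^{\Omega(n)}$ prototypes. Since conditioning a majority by a term of fixed literals yields a threshold function on the free variables, a single conditioning step carries us from the easy case to the provably hard one, which is incompatible with any polynomial-time \textbf{CD} algorithm.

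Concretely, I would take $N=6m+1$ and start with $f = MAJ_N$, which by Theorem~\ref{thm: bnn}(b) has a two-prototype BNN representation $\alpha$ of total bit size $O(N)$. Choose a term $T$ that fixes $2m$ designated variables to $1$ and $m$ designated variables to $0$; its encoding also has size $O(N)$. Setting $n=N-3m=3m+1$, the conditioned function $f\mid T$ on the remaining $n$ free variables evaluates to $1$ precisely when their sum is at least $(N+1)/2-2m=m+1$. Since $\lceil n/3\rceil=\lceil(3m+1)/3\rceil=m+1$, the conditioned function is exactly $TH_n^{\lceil n/3\rceil}$.

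Now Theorem~\ref{thm: bnn}(c) forces every BNN representation of $f\mid T$ to have size $2^{\Omega(n)} = 2^{\Omega(N)}$. If \textbf{BNN} supported \textbf{CD}, the compiler would produce a BNN representation of $f\mid T$ from $(\alpha,T)$ in time polynomial in the input bit length $O(N)$, and hence of polynomial size in $N$. This contradicts the exponential lower bound, so \textbf{CD} is not supported by \textbf{BNN}.

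I do not anticipate any serious obstacle; the only thing to get right is the arithmetic, which ensures that conditioning the $O(N)$-size majority BNN by a linear-size term lands exactly on the family of threshold functions ruled out by Theorem~\ref{thm: bnn}(c), with $n$ growing linearly in $N$ so that the exponential blowup applies to the input size.
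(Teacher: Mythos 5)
Your proposal is correct and follows essentially the same route as the paper: start from a majority function with a small BNN representation guaranteed by Theorem~\ref{thm: bnn}(b), condition on a linear-size term so that the result is exactly $TH_n^{\lceil n/3\rceil}$, and invoke the exponential lower bound of Theorem~\ref{thm: bnn}(c) to contradict a polynomial-time \textbf{CD} algorithm. The only cosmetic differences are that the paper works with the even-arity majority $TH_{4k}^{2k}$ and a purely positive conditioning term $x_1\land\dots\land x_k$ (landing on $TH_{3k}^{k}$), whereas you use the odd-arity two-prototype case and a mixed-polarity term; both variants are valid.
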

\begin{proof}
    Let $(P,N)$ be the smallest $BNN$ representing the Boolean majority function on $n=4k$ variables, for some $k\in\mathbb{N}$. That is, $(P,N)$ represents the function $TH_{4k}^{2k}$ and $|(P,N)|\le\frac{n}{2}+2=2(k+1)$ holds by Theorem \ref{thm: bnn}. Let $x_1,\dots,x_n$ denote the variables of the threshold function. Notice that by setting some variable $x_i$ to $1$, i.e. conditioning on term $T=x_i$ we obtain a threshold function that has one variable fewer and threshold smaller by one.

    Let $T_k=x_1\land x_2\land\dots\land x_{k}$ be a consistent term. Then $(TH_{4k}^{2k}\mid T_k) = TH_{3k}^{k}$. It follows from Theorem \ref{thm: bnn} that in order to represent such a function, we need a $BNN$ of size $2^{\Omega(3k)}=2^{\Omega(n)}$, and thus cannot produce it in polynomial time from the input $(P,N)$ of size $O(n)$.
\end{proof}

The following lemma shows that for threshold functions conditioning (by a positive term) and forgetting work the same way producing the same result.

\begin{lemma}\label{lem: equiv}
    For $m,n\in\mathbb{N}$, $m\leq n$ consider the threshold function $f=TH^m_n$ and let $i\in\set{1,2,\dots,n}$ be arbitrary. Then 
    \[f|x_i\equiv\exists x_i. f(x_1,x_2,\dots,x_n)\,.\]
\end{lemma}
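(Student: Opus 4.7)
The plan is to compute both sides explicitly and observe they coincide, exploiting the monotonicity of threshold functions. Note first that by symmetry of $TH_n^m$ in its variables, we may fix $i$ and reason about either side using only the single number $|x|$ of ones among $x_1,\dots,x_n$.

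First I would compute $f \mid x_i$ directly. Setting $x_i = 1$ inside $TH_n^m$, an assignment to the remaining $n-1$ variables satisfies the conditioned function iff the total weight of $(x_1,\dots,x_n)$ is at least $m$, which (since the fixed coordinate already contributes $1$) is equivalent to requiring the remaining $n-1$ coordinates to have weight at least $m-1$. Hence
\[
f \mid x_i \;=\; TH_{n-1}^{\,m-1}\,.
\]

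Next I would compute the forgetting operation by unfolding the definition $\exists x_i. f = (f \mid x_i) \vee (f \mid \lnot x_i)$. The first disjunct is $TH_{n-1}^{m-1}$ by the previous paragraph, and the second disjunct equals $TH_{n-1}^{m}$ by the analogous computation with $x_i$ fixed to $0$. The key observation is that $TH_{n-1}^m$ implies $TH_{n-1}^{m-1}$, because any assignment of weight at least $m$ certainly has weight at least $m-1$; equivalently, the models of $TH_{n-1}^m$ form a subset of the models of $TH_{n-1}^{m-1}$. Consequently
\[
\exists x_i. f \;=\; TH_{n-1}^{m} \vee TH_{n-1}^{m-1} \;=\; TH_{n-1}^{m-1}\,,
\]
which matches $f \mid x_i$ and proves the lemma.

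There is no real obstacle here; the only subtle point is the monotonicity-based collapse of the disjunction in the forgetting step, which is why the identity is special to threshold (and more generally monotone) functions rather than holding for arbitrary Boolean functions.
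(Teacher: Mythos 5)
Your proposal is correct and follows essentially the same route as the paper: compute $f\mid x_i = TH_{n-1}^{m-1}$ directly, unfold $\exists x_i.f$ into the disjunction $TH_{n-1}^{m}\vee TH_{n-1}^{m-1}$, and collapse it using the fact that the models of $TH_{n-1}^{m}$ are a subset of the models of $TH_{n-1}^{m-1}$. No gaps.
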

\begin{proof}
    From the proof of Theorem \ref{thm: cd}, we have that $f|x_i\equiv TH_{n-1}^{m-1}$. Thus it suffices to show that also $\exists x_i .f(x_1,x_2,\dots,x_n)\equiv TH_{n-1}^{m-1}$. By definition: 
	       \begin{gather*}
                \exists x_i.f(x_1,x_2,\dots,x_n)\equiv \\
                f(x_1,x_2,\dots,x_{i-1},0,x_{i+1},\dots,x_n)\lor f(x_1,x_2,\dots,x_{i-1},1,x_{i+1},\dots,x_n)\equiv\\
                TH_{n-1}^m\lor TH_{n-1}^{m-1}
            \end{gather*}
    Notice now that models of $TH_{n-1}^{m}$ form a subset of models of $TH_{n-1}^{m-1}$. We may then omit $TH_{n-1}^{m}$ and hence $\exists x_i.f(x_1,x_2,\dots,x_n)\equiv TH_{n-1}^{m-1}$, as desired.
\end{proof}

\begin{thm}\label{thm: fo}
    \textbf{BNN} does not support \textbf{FO}.
\end{thm}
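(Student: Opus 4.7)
The plan is to mirror the argument for \textbf{CD} in Theorem~\ref{thm: cd}, but replace the conditioning step by iterated existential quantification and invoke Lemma~\ref{lem: equiv} to show that the two operations produce the same output on threshold functions. Concretely, I would again take $n = 4k$, start from the majority function $f = TH_{4k}^{2k}$, and use Theorem~\ref{thm: bnn}(b) to fix a BNN representation $(P,N)$ of $f$ of size at most $2(k+1) = O(n)$, which will serve as the polynomial-size input to the forgetting transformation.

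Next I would choose the set of variables $V = \{x_1, x_2, \dots, x_k\}$ and consider $\exists V. f$. By applying Lemma~\ref{lem: equiv} to $x_1$, we obtain $\exists x_1. TH_{4k}^{2k} \equiv TH_{4k-1}^{2k-1}$; iterating the lemma on the variables $x_2, \dots, x_k$ in turn (each intermediate function is itself a threshold function so the lemma keeps applying) gives
\[
\exists V. f \;\equiv\; TH_{3k}^{k}.
\]
Since $k = \lceil 3k/3 \rceil$, Theorem~\ref{thm: bnn}(c) tells us that any BNN representation of $TH_{3k}^{k}$ must have size $2^{\Omega(3k)} = 2^{\Omega(n)}$. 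Therefore no algorithm can produce a BNN representation of $\exists V. f$ from $(P,N)$ in time polynomial in $|(P,N)| = O(n)$, which shows that \textbf{BNN} does not support \textbf{FO}.

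The only genuinely new ingredient compared with Theorem~\ref{thm: cd} is the equivalence between conditioning and forgetting on a threshold function, and that work has already been done in Lemma~\ref{lem: equiv}; the rest is the same size-blowup argument. Hence I do not expect a real obstacle. The one point I would make explicit is that forgetting a set is defined via iterated single-variable forgetting and that Lemma~\ref{lem: equiv} can be reapplied at each step because the intermediate functions remain thresholds, so that the final identification with $TH_{3k}^{k}$ is legitimate.
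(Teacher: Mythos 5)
Your proposal is correct and follows essentially the same route as the paper: the paper's proof also combines Theorem~\ref{thm: cd} with Lemma~\ref{lem: equiv}, starting from $TH_n^{n/2}$ with an $O(n)$-size BNN and forgetting the first $n/4$ variables to arrive at a function requiring a BNN of size $2^{\Omega(n)}$. Your write-up merely makes explicit the iteration of Lemma~\ref{lem: equiv} over the forgotten variables, which the paper leaves implicit.
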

\begin{proof}
    We combine Theorem \ref{thm: cd} and Lemma \ref{lem: equiv}. Again, $TH_n^{n/2}$ can be represented by a BNN of size $O(n)$, but we need a BNN of size $2^{\Omega(n)}$ after forgetting the first $n/4$ variables.
\end{proof}

Finally, we shall show that the \textbf{BNN} language does not support conjunction and disjunction even in the bounded case.

\begin{thm}\label{thm: conj}
    \textbf{BNN} does not support $\mathbf{\wedge BC}$.
\end{thm}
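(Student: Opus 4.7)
My plan is to exhibit two BNN inputs, each of polynomial size in $n$, whose conjunction provably requires exponentially many prototypes. The target conjunction will be the ``exactly-half-weight'' indicator $h(x) = [\,|x| = n/2\,]$ for even $n$. The key observation is that any two distinct weight-$n/2$ vectors must differ in at least two coordinates (any $0 \to 1$ flip has to be compensated by some $1 \to 0$ flip), so every model of $h$ is pairwise non-adjacent in $\mathcal{B}_n$, hence isolated in the sense of Corollary~\ref{cor: pos}. That corollary will then force every one of the $\binom{n}{n/2} = 2^{\Omega(n)}$ models of $h$ to appear as a distinct positive prototype in any BNN representation of $h$.

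To realize $h$ as a bounded conjunction of functions with polynomial BNNs, I write $h = TH_n^{n/2} \wedge \lnot TH_n^{n/2+1}$. The first conjunct is $MAJ_n$ itself, which admits a BNN of size at most $n/2 + 2$ by Theorem~\ref{thm: bnn}(b). For the second conjunct I will use the identity $\lnot TH_n^{n/2+1}(x) = MAJ_n(\bar x)$ (both just say $|x| \le n/2$), and observe that if $(P,N)$ is a BNN for $MAJ_n$, then its componentwise complement $(\bar P, \bar N)$ is a BNN of the same size for $MAJ_n(\bar x)$; this follows from the one-line distance identity $d_H(x, \bar p) = d_H(\bar x, p)$, which guarantees that the nearest-neighbor classification on $(\bar P, \bar N)$ at $x$ is the same as the classification by $(P,N)$ at $\bar x$. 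Both inputs therefore have size $O(n)$.

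The conclusion will then be immediate. If \textbf{BNN} supported $\wedge BC$, then in polynomial time one could transform the two $O(n)$-sized inputs above into a BNN representation of their conjunction $h$, but any such representation has size at least $\binom{n}{n/2} = 2^{\Omega(n)}$, a contradiction. The only point that requires care is the complementation trick for the second conjunct, but it reduces to the Hamming-distance identity just noted, so no serious obstacle arises; everything else is a direct application of Theorem~\ref{thm: bnn}(b) and Corollary~\ref{cor: pos}.
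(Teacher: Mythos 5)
Your proposal is correct and follows essentially the same route as the paper: both express the exactly-half-weight function as the conjunction of the (non-strict) majority and minority functions, each admitting an $O(n)$-size BNN, and then invoke Corollary~\ref{cor: pos} on the $\binom{n}{n/2}=2^{\Omega(n)}$ isolated models of the conjunction. The only cosmetic difference is that you justify the small BNN for the minority conjunct via the explicit complementation identity $d_H(x,\bar p)=d_H(\bar x,p)$, where the paper simply appeals to the $0$/$1$-symmetric version of Theorem~\ref{thm: bnn}(b).
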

\begin{proof}
Consider the (non-strict) majority function on $2n$ variables 
\[  
(f(x_1, \ldots ,x_{2n})=1) \;\; \equiv \;\; (\sum_{i=1}^{2n} \geq n).
\]
This function has a BNN representation of size $O(n)$ by Theorem \ref{thm: bnn}. Furthermore, consider the (non-strict) minority function on $2n$ variables 
\[  
(g(x_1, \ldots ,x_{2n})=1) \;\; \equiv \;\; (\sum_{i=1}^{2n} \leq n).
\]
By a symmetric argument (just exchange $0$ and $1$ in the statement and proof of Theorem \ref{thm: bnn}) this function again admits a BNN representation of size $O(n)$. Now consider the function $f \wedge g$
\[  
((f \wedge g)(x_1, \ldots ,x_{2n})=1) \;\; \equiv \;\; (\sum_{i=1}^{2n} = n).
\]
This function has exactly $\binom{2n}{n} = 2^{\Omega(n)}$ isolated models (flipping a single bit in any model produces a vector where the sum of coordinates is either $n-1$ or $n+1$ which is in both cases a non-model). It follows by Corollary~\ref{cor: pos} that any BNN representation of $f \wedge g$ has size $2^{\Omega(n)}$.
\end{proof}

\begin{cor}\label{thm: disj}
    \textbf{BNN} does not support $\mathbf{\vee BC}$.
\end{cor}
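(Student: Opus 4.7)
The plan is to reduce to Theorem \ref{thm: conj} via De Morgan's law, using the fact from Observation \ref{obs: neg} that negation is essentially free for the \textbf{BNN} language. Suppose for contradiction that \textbf{BNN} supports $\mathbf{\vee BC}$, meaning there is a polynomial-time algorithm that, given two BNN representations $(P_1,N_1)$ and $(P_2,N_2)$ of functions $h_1$ and $h_2$, outputs a BNN representation of $h_1 \vee h_2$.

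I would then take the same functions $f$ (non-strict majority) and $g$ (non-strict minority) on $2n$ variables used in the proof of Theorem \ref{thm: conj}. Both admit BNN representations of size $O(n)$, so by Observation \ref{obs: neg} the negations $\lnot f$ and $\lnot g$ also admit BNN representations of size $O(n)$, obtained in linear time by swapping the positive and negative prototype sets. Feeding these two BNNs into the hypothetical polynomial-time $\mathbf{\vee BC}$ algorithm would yield a polynomial-size BNN representation of $\lnot f \vee \lnot g$ in polynomial time.

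Applying Observation \ref{obs: neg} once more (again a linear-time swap) gives a polynomial-size BNN for $\lnot(\lnot f \vee \lnot g) = f \wedge g$ in polynomial total time from the inputs. This directly contradicts Theorem \ref{thm: conj}, which showed that any BNN representation of $f \wedge g$ must have size $2^{\Omega(n)}$. There is no real obstacle here; the argument is a clean syntactic reduction, and the only thing one must be careful about is that the negation step truly preserves being a valid BNN (the tie-freeness condition is symmetric in $P$ and $N$), which is exactly what Observation \ref{obs: neg} records.
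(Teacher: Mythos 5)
Your proof is correct and follows essentially the same route as the paper: both use the negations of the majority/minority functions from Theorem~\ref{thm: conj}, DeMorgan's law, and Observation~\ref{obs: neg} to transfer the exponential lower bound for $f \wedge g$ to $\lnot f \vee \lnot g$. The only cosmetic difference is that you phrase it as a reduction by contradiction with an explicit second negation step, whereas the paper directly observes that $\lnot(f\wedge g)$ itself requires exponential-size BNN representations because negation preserves size.
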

\begin{proof}
Consider the negations of functions $f$ and $g$ from the previous proof (those are in fact strict minority and strict majority on $2n$ variables). Since negation preserves the size of BNN representations by Observation~\ref{obs: neg}, both $\lnot f$ and $\lnot g$ have BNN representation of size $O(n)$. However, $(\lnot f \vee \lnot g) = \lnot (f \wedge g)$ and any BNN representation of this function has size $2^{\Omega(n)}$ by the previous proof and the fact that negation preserves the size. 
\end{proof} 

The above results show, that when it comes to transformations, the \textbf{BNN} language is not a very good choice for a target compilation language. What disqualifies \textbf{BNN} the most is (in our opinion) the fact that it does not support conditioning unlike all other knowledge representation languages considered in \cite{KCM}, \cite{pbc}, and \cite{SL}. Conditioning is an essential transformation that is needed in many applications. In particular, if some of the variables are observable (and the others are decision variables) then queries are often asked after some (or all) of the observable variables are fixed to the observed values (which amounts to conditioning on this set of variables and only then answering the query).

The complexity status of all standard transformations for \textbf{BNN} and selected standard languages is summarized in Table~\ref{tab:transformations}. The only standard transformation for the \textbf{BNN} language for which the complexity status remains open is singleton forgetting. 

\begin{table}[htb]
    \centering
    %\resizebox{\columnwidth}{!}{%
        \begin{tabular}{c | c || c | c || c | c || c | c || c}
             $\mathbf{L}$ & $\mathbf{CD}$ & $\mathbf{FO}$ & $\mathbf{SFO}$ & $\mathbf{\wedge C}$ & $\mathbf{\wedge BC}$ & $\mathbf{\vee C}$ & $\mathbf{\vee BC}$ & $\mathbf{\neg C}$ \\\hline\hline
             $\mathbf{NNF}$ & \checkmark  & $\circ$ &  \checkmark  & \checkmark & \checkmark & \checkmark & \checkmark &  \checkmark \\
             $\mathbf{d}$-$\mathbf{NNF}$ & \checkmark & $\circ$ & \checkmark & \checkmark & \checkmark & \checkmark & \checkmark & \checkmark \\
             $\mathbf{DNNF}$ &  \checkmark  & \checkmark &  \checkmark  & $\circ$ & $\circ$ & \checkmark & \checkmark &  $\circ$  \\
             %$\mathbf{s-NNF}$ & \checkmark & $\circ$ & \checkmark & \checkmark & \checkmark & \checkmark & \checkmark & \checkmark \\
             $\mathbf{d}$-$\mathbf{DNNF}$ & \checkmark  & $\circ$ &  $\circ$  & $\circ$ & $\circ$ & $\circ$ & $\circ$ & ? \\
             $\mathbf{BDD}$ & \checkmark & $\circ$ & \checkmark & \checkmark & \checkmark & \checkmark & \checkmark & \checkmark \\
             $\mathbf{FBDD}$ & \checkmark  & $\bullet$ &  $\circ$  & $\bullet$ & $\circ$ & $\bullet$ & $\circ$ & \checkmark  \\
             $\mathbf{OBDD}$ & \checkmark  & $\bullet$ &  \checkmark  & $\bullet$ & $\circ$ & $\bullet$ & $\circ$ & \checkmark  \\
             %$\mathbf{OBDD_<}$ & \checkmark  & $\bullet$ &  \checkmark  & $\bullet$ & \checkmark & $\bullet$ & \checkmark & \checkmark  \\
             $\mathbf{CNF}$ & \checkmark  & $\circ$ &  \checkmark  & \checkmark & \checkmark & $\bullet$ & \checkmark & $\bullet$  \\
             $\mathbf{DNF}$ & \checkmark  & \checkmark &  \checkmark  & $\bullet$ & \checkmark & \checkmark & \checkmark & $\bullet$  \\
             $\mathbf{PI}$ & \checkmark  & \checkmark &  \checkmark  & $\bullet$ & $\bullet$ & $\bullet$ & \checkmark & $\bullet$ \\
             $\mathbf{IP}$ & \checkmark  & $\bullet$ &  $\bullet$  & $\bullet$ & \checkmark & $\bullet$ & $\bullet$ &  $\bullet$ \\
             $\mathbf{MODS}$ & \checkmark  & \checkmark &  \checkmark  & $\bullet$ & \checkmark & $\bullet$ & $\bullet$ &  $\bullet$ \\\hline
             $\mathbf{CARD}$ & \checkmark  & $\circ$ & ? & \checkmark & \checkmark & $\bullet$ & $\bullet$ & $\bullet$ \\ 
             $\mathbf{PBC}$ & \checkmark  & $\bullet$ & $\bullet$  & \checkmark & \checkmark & $\bullet$ & $\bullet$ & $\bullet$ \\
             $\mathbf{SL}$ & \checkmark  & \checkmark &  \checkmark  & $\bullet$ & $\bullet$ & $\bullet$ & $\bullet$ &  \checkmark  \\ \hline
             $\mathbf{BNN}$ & $\bullet$  & $\bullet$ & ? & $\bullet$ & $\bullet$ & $\bullet$ &$\bullet$ &  \checkmark  \\
        \end{tabular}
    %}
    \caption{Languages introduced in \cite{KCM}, \cite{pbc}, and \cite{SL} and their polynomial time transformations. \checkmark means ``satisfies'', $\bullet$ means ``does not satisfy'', $\circ$ means ``does not satisfy unless P=NP'', and ? corresponds to an open problem.}
    \label{tab:transformations}
\end{table}

In this context, we note that also singleton conditioning is a transformation with an unknown complexity status. Of course, singleton conditioning was not considered as a standard transformation in~\cite{KCM} as all languages considered there satisfy even general conditioning, but it does make sense for the \textbf{BNN} language. We conjecture that both of these transformations can be done in polynomial time. Proving this conjecture would partly justify the usefulness of the \textbf{BNN} language, as conditioning on a constant size set of observable variables could (provably) lead to only a polynomial blowup of the resulting BNN representation (the current proof of hardness for general conditioning requires $\Omega(n)$ size set of variables).

\section{Queries}

In this section, we shall show that the \textbf{BNN} language supports a reasonably large subset of standard queries from~\cite{KCM}.
We begin with a trivial observation that both consistency and validity can be checked in constant time. 

\begin{obs}
        \textbf{BNN} supports \textbf{CO} and \textbf{VA}.
\end{obs}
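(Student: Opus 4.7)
The plan is to observe that both queries reduce to a trivial emptiness check on the two prototype sets of the input BNN representation $(P,N)$, and therefore can be decided in constant time (or linear in the input size if one insists on scanning the representation).

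For \textbf{CO}, I would argue that $f$ is consistent if and only if $P \neq \emptyset$. The forward direction is immediate from the definition: any $b \in P$ satisfies $f(b)=1$, so $f$ has a model. For the converse, if $P=\emptyset$ then the first bullet of the BNN definition cannot be fulfilled for any $a$ (no witness $b \in P$ exists), hence $f \equiv 0$ and $f$ is not consistent. So the algorithm just tests whether $P$ is empty.

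For \textbf{VA}, I would use the symmetric observation that $f$ is valid if and only if $N = \emptyset$ (assuming the input is a well-formed BNN representation so $P \neq \emptyset$). If $N=\emptyset$, then no vector $a$ can satisfy the second bullet of the BNN definition, and so $f \equiv 1$; conversely, any $c \in N$ witnesses $f(c)=0$, precluding validity. The test is again a single emptiness check.

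There is no real obstacle here; the only thing to be careful about is making the logical equivalences between the two conditions and the semantics of the representation explicit, rather than hand-waving. Both checks clearly run in time $O(1)$ (or $O(|\alpha|)$ if we charge for reading the representation), which is certainly polynomial, so the \textbf{BNN} language supports both \textbf{CO} and \textbf{VA}.
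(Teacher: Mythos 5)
Your proposal is correct and matches the paper's own argument exactly: consistency reduces to checking $P\neq\emptyset$ and validity to checking $N=\emptyset$, each in constant time. The extra justification you give for the two equivalences is sound but the route is the same.
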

\begin{proof}
    For a BNN representation $(P,N)$ consistency check is equivalent to checking that $P$ is non-empty while validity check is equivalent to checking that $N$ is empty. Both of these checks can be done in constant time. 
\end{proof}

Next, we prove that implicant check is supported by the \textbf{BNN} language.

\begin{thm}\label{thm: im}
    \textbf{BNN} supports \textbf{IM}.
\end{thm}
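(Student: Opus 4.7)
The plan is to reduce implicant check to a polynomial number of distance comparisons. Given a BNN $\alpha = (P, N)$ representing a function $f$ and a term $t$, let $C_t \subseteq \mathcal{B}^n$ denote the subcube of vectors that satisfy $t$. Because a valid BNN forbids distance ties between the closest positive and the closest negative prototype, the term $t$ is not an implicant of $f$ iff there exist $n \in N$ and $x \in C_t$ such that $d_H(x,n) \leq d_H(x,p)$ for every $p \in P$. I therefore plan to decide implicant check by iterating over $n \in N$ and, for each $n$, deciding whether such an $x \in C_t$ exists.

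For a fixed $n \in N$, I would introduce a single canonical candidate $x^{*}_n \in C_t$ that agrees with $t$ on the coordinates $t$ fixes and agrees with $n$ on all remaining (free) coordinates. The key lemma is that some $x \in C_t$ satisfies the system $\{d_H(x,n) \leq d_H(x,p)\}_{p \in P}$ iff $x^{*}_n$ itself does. To prove the lemma, I would substitute $y = x \oplus n$ and $q_p = p \oplus n$, which converts each inequality $d_H(x,p) \geq d_H(x,n)$ into a cardinality constraint of the form $|y \wedge q_p| \leq \lfloor |q_p|/2 \rfloor$. The vector $y^{*}$ corresponding to $x^{*}_n$ is obtained from any feasible $y$ by zeroing out the free coordinates, so $y^{*} \leq y$ coordinatewise and hence $|y^{*} \wedge q_p| \leq |y \wedge q_p|$ for every $p$, which preserves every constraint. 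This coordinatewise Pareto-optimality of $x^{*}_n$ is the main, and essentially the only nontrivial, step.

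Once the lemma is in place, the algorithm is straightforward: enumerate each $n \in N$, construct $x^{*}_n$ in $O(n)$ time, and verify the $|P|$ inequalities $d_H(x^{*}_n, p) \geq d_H(x^{*}_n, n)$ in $O(n\,|P|)$ time. If the verification succeeds for some $n$, output that $t$ is not an implicant (with $x^{*}_n$ as an explicit witness); otherwise output that $t$ is an implicant. Inconsistent terms (for which $C_t = \emptyset$) are trivially implicants and can be detected in a linear preprocessing pass. The overall running time is $O(n \cdot |P| \cdot |N|)$, which is polynomial in the input size $|\alpha|$. The conceptual obstacle is recognizing that the seemingly multivariate search for $x$ in $C_t$ collapses, per negative prototype, to a single canonical candidate; the rest is routine verification from the BNN definition.
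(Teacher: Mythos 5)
Your proposal is correct and matches the paper's proof in all essential respects: your canonical candidate $x^{*}_n$ is exactly the paper's projection $\mathrm{proj}_S(q)$ of each negative prototype onto the subcube, and the resulting algorithm (test, for each negative prototype, whether its projection is at least as close to it as to every positive prototype) is identical, including the $O(n\,|P|\,|N|)$ running time. The only difference is cosmetic: you justify the key lemma via the XOR substitution and coordinatewise monotonicity of $|y \wedge q_p|$, whereas the paper splits the Hamming distance across fixed and free coordinates and applies the triangle inequality; both are valid proofs of the same fact.
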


\begin{proof}
    Let $f$ be a Boolean function and $(P,N)$ its BNN representation. Let $T=l_1\land\dots\land l_k$ be a consistent term. Without loss of generality, we may assume that $\forall1\le i\le k:$ $l_i\in\set{x_i,\lnot x_i}$, since otherwise we may relabel the variables. Our aim is to design a polynomial time algorithm that checks whether $T\implies f$. Let us denote by $y\in\mathcal{B}^{k}$  the (partial) vector satisfying $T$, i.e. defined by $y_i=1$ if $l_i=x_i$ and $y_i=0$ if $l_i=\lnot x_i$. Furthermore, let $S$ denote the sub-cube of $\mathcal{B}^n$ determined by the vector $y$. That is, 
    \[
    S:=\set{x\in\mathcal{B}^n\mid \forall1\le i\le k:x_i=y_i}.
    \]
    Clearly, $T\implies f$ if and only if there is no negative vector of $f$ inside $S$ (i.e. after fixing the values in $y$ the resulting function is constant 1). We shall show that this condition can be tested efficiently.
    
    If there is a negative prototype in $S$ (which is easy to check), we are done and $T$ is not an implicate of $f$. If all negative prototypes are outside of $S$, let us denote by $N'$ the set of projections of all negative prototypes into $S$:
    \[
    N':= \set{\text{proj}_S(q) = (y_1, \ldots ,y_k,q_{k+1}, \ldots ,q_n) \mid q\in N},
    \]
    i.e. $\text{proj}_S(q)_i=y_i$ for $1\leq i\leq k$ and $\text{proj}_S(q)_i=q_i$ otherwise. 

    If there exists a negative prototype $q$ which is at most as far from its projection $\text{proj}_S(q)$ than any positive prototype, i.e.
    \[
    \exists q \in N \forall p \in P: d_H(p,\text{proj}_S(q)) \geq d_H(q,\text{proj}_S(q))
    \]
    then either there exists another negative prototype which is strictly closer to $\text{proj}_S(q)$ than $q$, or $q$ is the closest negative prototype to $\text{proj}_S(q)$ in which case the above inequality must be strict for every positive prototype by the definition of BNN representation. In both cases    
    $f(\text{proj}_S(q))=0$ follows, we have found a negative vector of $f$ in $S$, and thus we can again conclude that $T$ is not an implicate of $f$. Note, that this condition can be tested in polynomial time with respect to the size of $P$ for any fixed $q$ and thus in polynomial time with respect to the size of $(P,N)$ for all $q$.

    Let us now assume the opposite, namely that for every $\text{proj}_S(q)\in N'$ there exists a positive prototype $p\in P$ which is strictly closer to it than $q$:
    \[
    \forall q \in N \exists p \in P: d_H(p,\text{proj}_S(q)) < d_H(q,\text{proj}_S(q))
    \]
    We claim that in this case there are no negative vectors of $f$ inside $S$, and we can conclude that $T$ is an implicate of $f$.
    Let us assume for contradiction that there exists such a vector $x\in S$ for which $f(x)=0$. Let $q$ be the closest negative prototype to $x$ and let $q'=\text{proj}_S(q)$ be its projection on $S$. Furthermore, let $p$ be the positive prototype closest to $q'$. Then 
    \[
    d_H(q,x)=d_H(q,q')+d_H(q',x)>d_H(p,q')+d_H(q',x)\ge d_H(p,x),
    \]
    The first equality holds because $d_H(q,q')$ depends only on the first $k$ coordinates while $d_H(q',x)$ depends only on the remaining coordinates. The first inequality follows from the assumption and the second from the triangle inequality for Hamming distance. However, $d_H(q,x) > d_H(p,x)$ implies $f(x)=1$ which is a contradiction. 

    The above discussion is summarized in Algorithm \ref{alg: im}. The algorithm runs in time $O(n|P||N|)$ (which is polynomial in the size of the input $n(|P|+|N|)$), as its main part consists of two nested \emph{for} loops. In the worst case, the algorithm considers all projections of negative prototypes and calculates their distances to each of the positive prototypes.
\end{proof}
\begin{algorithm}
\caption{Implicant check for a BNN representation}\label{alg: im}
    \begin{algorithmic}
        \Require $(P,N)$ representing $f:\mathcal{B}^n\to\mathcal{B}$ and a consistent term $T=l_1\land\dots\land l_k$
        \Ensure \textbf{IM}$(f,T)$
        
        \For{$q\in N$}
            \If{$q \in S$}
                \State \Return 0
            \EndIf
            \State $q' \gets \text{proj}_S(q)$
            \State $d \gets +\infty$
            \For{$p\in P$}
                \State $d\gets\min\set{d,d_H(p,q')}$
            \EndFor
            \If{$d_H(q,q') \leq d$}
                \State \Return 0
            \EndIf
        \EndFor
        \State \Return 1
    \end{algorithmic}
\end{algorithm}

Since negation is trivial for the \textbf{BNN} language by Observation~\ref{obs: neg}, we immediately get the following result for clausal entailment. 

\begin{cor}
    \textbf{BNN} supports \textbf{CE}.
\end{cor}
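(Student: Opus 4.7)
The plan is to reduce clausal entailment to implicant checking using the trivial negation transformation from Observation~\ref{obs: neg}. Recall that \textbf{CE} asks whether, for a given clause $C = l_1 \vee \dots \vee l_k$, the entailment $f \implies C$ holds. By contraposition, this is equivalent to $\lnot C \implies \lnot f$, and applying De Morgan's laws to $\lnot C$ yields a term $T = \bar{l}_1 \wedge \dots \wedge \bar{l}_k$. We may assume without loss of generality that $C$ contains no complementary pair of literals (otherwise $C$ is a tautology and we return $1$ immediately), so that $T$ is a consistent term and forms a valid input to the \textbf{IM} procedure.

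First I would invoke Observation~\ref{obs: neg} on the input BNN representation $(P,N)$ of $f$: swapping the roles of positive and negative prototypes yields $(N,P)$, a BNN representation of $\lnot f$, constructed in linear time. Then I would run Algorithm~\ref{alg: im} from the proof of Theorem~\ref{thm: im} on inputs $(N,P)$ and $T$ to decide whether $T$ is an implicant of $\lnot f$. If the answer is yes, return $1$ (meaning $f \implies C$); otherwise return $0$.

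For correctness, note the equivalences $f \implies C \iff \lnot C \implies \lnot f \iff T \implies \lnot f$, so the output of the \textbf{IM} call on $((N,P),T)$ is precisely the output required for \textbf{CE} on $((P,N),C)$. For complexity, the negation step is linear in $|(P,N)|$, and the subsequent \textbf{IM} call runs in time $O(n|P||N|)$ by Theorem~\ref{thm: im}, so the overall procedure is polynomial in the size of the input. There is no real obstacle here beyond the small bookkeeping check that $C$ is not a tautology; the main content of the proof is already supplied by Theorem~\ref{thm: im} and Observation~\ref{obs: neg}.
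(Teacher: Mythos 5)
Your proposal is correct and follows essentially the same route as the paper: contraposition $f \implies C \iff \lnot C \implies \lnot f$, the free negation from Observation~\ref{obs: neg} giving $(N,P)$ for $\lnot f$, and a single call to Algorithm~\ref{alg: im} on the term $T = \lnot C$. The only addition is your explicit handling of tautological clauses, which the paper sidesteps by assuming $C$ is consistent; that is a harmless and reasonable bit of bookkeeping.
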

\begin{proof}
     Let $f$ be a Boolean function and $(P,N)$ its BNN representation. Let $C=l_1\lor\dots\lor l_k$ be a consistent clause. Our aim is to design a polynomial time algorithm that checks whether $f\implies C$. Clearly $(f\implies C)\equiv(\lnot C\implies \lnot f)$. Moreover, $\lnot f$ is readily available (recall Observation~\ref{obs: neg}), and by DeMorgan laws the negation of a clause $C$ is a term $T$. So let $T=\lnot C$. Now, we may check whether $f$ entails $C$ by calling Algorithm \ref{alg: im} for inputs $\lnot f=(N,P)$ and $\lnot C=T$, getting the correct answer in polynomial time.
\end{proof}

It is interesting to note that for most standard languages which support CE this property stems from the fact that such languages support CD and CO. The CE algorithm in these cases first performs the required conditioning, then tests consistency, and then outputs yes if and only if the partial function is not consistent (i.e. identically zero). This is not the case for BNN as it supports CE despite of not supporting CD which is a rather unique combination of supported transformations and queries. We are not aware of any other knowledge representation language that supports clausal entailment without supporting conditioning.

The fact that \textbf{BNN} supports clausal entailment also directly implies that \textbf{BNN} supports model enumeration.

\begin{cor}
    \textbf{BNN} supports \textbf{ME}.
\end{cor}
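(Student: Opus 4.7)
The plan is to exhibit a model enumeration algorithm with polynomial delay using the clausal entailment oracle provided by the previous corollary. Concretely, I would do a standard self-reducibility search over the Boolean cube: maintain a partial assignment $\sigma$ to some prefix of the variables $x_1,\ldots,x_k$, and at each node of the search tree test whether $\sigma$ can be extended to a full model of $f$. The key translation is that ``$\sigma$ has no extension to a model of $f$'' is equivalent to the clausal entailment statement
\[
f \;\implies\; \bigvee_{i=1}^{k} \ell_i,
\]
where $\ell_i = x_i$ if $\sigma(x_i)=0$ and $\ell_i = \lnot x_i$ if $\sigma(x_i)=1$, i.e.\ the clause that forbids $\sigma$. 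Since \textbf{BNN} supports \textbf{CE}, this test can be performed in polynomial time in $|(P,N)|$ using the algorithm underlying the previous corollary.

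With this oracle in hand, the enumeration proceeds as a depth-first traversal. Starting from the empty assignment, for the current $\sigma$ on variables $x_1,\ldots,x_k$, query \textbf{CE} to decide which of the two extensions $\sigma\cup\{x_{k+1}=0\}$ and $\sigma\cup\{x_{k+1}=1\}$ admits at least one model, and recurse only on those branches that do. When $k=n$, the sole vector consistent with $\sigma$ is guaranteed (by the sequence of successful \textbf{CE} tests along the branch) to be a model of $f$, so it is output. Correctness follows directly from the equivalence above, and no model is produced twice because distinct branches of the tree correspond to distinct full assignments.

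For the complexity bound, observe that along any root-to-leaf path we perform at most $2n$ \textbf{CE} queries, each of cost polynomial in $|(P,N)|$ by the algorithm from the proof of Theorem~\ref{thm: im}. Between the output of two consecutive models, the search backtracks and descends along at most $O(n)$ edges, each charged with one \textbf{CE} call; thus the delay between successive models is polynomial in $|(P,N)|$, which is exactly what \textbf{ME} requires.

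The only subtlety I would expect to check carefully is the reduction itself: one must verify that the clause $\bigvee_i \ell_i$ used at each node is indeed \emph{consistent} (which it is as soon as $k\ge 1$, so $\sigma$ is non-empty), since the \textbf{CE} algorithm is stated for consistent clauses. The empty partial assignment needs to be handled separately, but there a single \textbf{CO} call (which \textbf{BNN} supports) suffices to decide whether to start the search at all. Once this bookkeeping is in place the argument goes through without further obstacle.
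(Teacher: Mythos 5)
Your proposal is correct and follows essentially the same route as the paper: a DPLL-style tree search that prunes a branch exactly when a \textbf{CE} test certifies the corresponding partial assignment has no extension to a model, so every surviving branch ends in a model. The paper states the bound as total work polynomial in the number of models rather than as polynomial delay, but the underlying algorithm and the reduction to clausal entailment are the same.
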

\begin{proof}
Models can be enumerated using a tree search such as the simple DPLL algorithm where at every step before a value is assigned to a variable and a branch to a node on the next level is built, it is first tested whether the current partial assignment of values to variables plus the considered assignment yields a subfunction which is identically zero (and if yes then the branch is not built). This of course amounts to a clausal entailment test. Therefore all branches of the tree that is built have full length (all variables are fixed to constants) and terminate at models. Thus both the size of the tree and the total work required is upper bounded by a polynomial in the number of models.
\end{proof}

In the rest of this section, we shall prove that \textbf{BNN} does not support the remaining standard queries, i.e. that there are no polynomial time algorithms for EQ, SE and CT queries unless P=NP. All three proofs are based on the same reduction from the following NP-complete decision problem:

\medskip

\textbf{Half-Size Independent Set (HSIS)}\\
Input: An undirected graph $G=(V,E)$ with $n$ vertices.\\
Question: Does there exist an independent set with exactly $n/2$ vertices in $G$?

\medskip

Although IS, the general independent set problem (in which a parameter $k$ is part of the input and the question asks for the existence of an independent set of size $k$), is widely known to be NP-complete, we have to argue that also the restricted version HSIS is hard. To see this, consider the textbook reduction from 3-SAT to IS which for every cubic clause creates a clique of size three and then connects vertices that correspond to complementary literals (one edge for each such pair). It is easy to see that the input 3-SAT instance (with $m$ clauses) is satisfiable if and only if the constructed graph on $3m$ vertices contains an independent set of size exactly $m$. Modifying this construction by adding $m$ isolated vertices yields a reduction in which the input 3-SAT instance is satisfiable if and only if the constructed graph on $4m$ vertices contains an independent set of size exactly $2m$, which is an instance of HSIS. 

Now we are ready to prove the first hardness result.

\begin{thm}\label{thm: eq}
    The equivalence query is co-NP complete for the \textbf{BNN} language.
\end{thm}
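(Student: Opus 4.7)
The plan is to prove that EQ is both in coNP and coNP-hard.

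\emph{Upper bound.} A certificate of non-equivalence of two BNN representations $\alpha_1, \alpha_2$ is any vector $x \in \mathcal{B}^n$ on which they evaluate to different values. By the BNN definition, the function value at $x$ is determined by the class of the strictly nearest prototype, which can be computed in polynomial time by scanning both prototype sets and comparing Hamming distances. Hence EQ $\in$ coNP.

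\emph{Lower bound.} I would reduce from the complement of HSIS, whose coNP-hardness is established in the paragraph just above the theorem. Given a graph $G = (V, E)$ with $|V| = n$ even, the goal is to build in polynomial time two BNN representations $\alpha_1, \alpha_2$ such that $\alpha_1 \equiv \alpha_2$ iff $G$ has no independent set of size exactly $n/2$. I would take $\alpha_1$ to be a size-$O(n)$ BNN of the threshold function $TH_n^{n/2+1}$: since $TH_n^{n/2+1}(x) = \lnot MAJ_n(\bar x)$ for even $n$, such a representation is obtained from the BNN of $MAJ_n$ guaranteed by Theorem~\ref{thm: bnn}(b) by complementing all prototype coordinates and then swapping $P$ with $N$ using Observation~\ref{obs: neg}. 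The creative step is the construction of $\alpha_2 = (P,N)$ with
\[
P = \{\vec{1} \oplus \vec{e}_i : i \in V\}, \qquad N = \{\vec{0}\} \cup \{\vec{e}_i + \vec{e}_j : (i,j) \in E\},
\]
giving total size $O(n^2)$.

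The heart of the argument is to verify that $\alpha_2$ represents the function $f_2$ that agrees with $TH_n^{n/2+1}$ on every vector of weight $\neq n/2$, and on weight-$n/2$ vectors outputs $1$ iff $\mathrm{supp}(x)$ is an independent set of $G$. Direct computation gives $d_H(x,P) = n - 1 - |x|$ for $|x| < n$, and $d_H(x,N) = \min(|x|,\; |x| + 2 - 2M)$ where $M = \max_{(i,j)\in E}|\{i,j\}\cap \mathrm{supp}(x)|$; a case analysis on $|x|$ and $M$ then produces $f_2$. Because $f_2$ and $TH_n^{n/2+1}$ differ precisely on characteristic vectors of ISs of size exactly $n/2$, we conclude $\alpha_1 \equiv \alpha_2$ iff $G$ has no such IS.

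The main obstacle is finding the correct prototype placement for $\alpha_2$; in particular, adding $\vec{0}$ to $N$ is specifically required to rule out otherwise-problematic low-weight supports consisting entirely of isolated vertices of $G$, which would otherwise be misclassified. Once the construction is fixed, the remaining technicality is to verify that $\alpha_2$ is a well-defined BNN, i.e.\ $d_H(x,P) \neq d_H(x,N)$ for every $x$. Equating the two distances yields either $2|x| + 2M = n - 3$ or $2|x| = n - 1$, and in both equations the left-hand side is even while the right-hand side is odd for even $n$, so no ties can occur. Corner cases ($|x| = n$, and $E = \emptyset$ where $\alpha_2$ represents the ``weight $\geq n/2$'' function while HSIS is trivially yes on any graph without edges) are checked separately and preserve the reduction.
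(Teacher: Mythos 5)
Your proof is correct and is essentially the paper's own reduction viewed through the hypercube symmetry $x \mapsto \bar{x}$ combined with negation: the paper compares $MAJ_{2k}$ against a BNN whose edge gadgets are weight-$(2k-2)$ positive prototypes and whose negative prototypes sit at weight $1$, which is exactly the complement-and-swap of your $\alpha_2$ with its weight-$2$ negative edge prototypes, and both arguments rest on the same three steps (agreement off the middle level, encoding independent sets of size $n/2$ on the middle level, and a parity argument ruling out ties). The only blemish is a harmless sign slip (equating distances gives $2|x|-2M=n-3$, not $2|x|+2M=n-3$), which does not affect the even-versus-odd conclusion.
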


\begin{proof}
It is easy to see that the EQ query for \textbf{BNN} belongs to co-NP. A certificate for a negative answer is a vector on which the two input BNN representations give opposite function values (which can be of course checked in polynomial time). 

For the hardness part, let $G=(V,E)$ with $n=2k$ vertices be an instance of HSIS (we assume without loss of generality that $k > 2$). We define two input BNN representations for the EQ query as follows:
\begin{enumerate}
    \item $F=(P_f,N_f)$ is a BNN representation of the majority function $f = MAJ_{2k}$ on $2k$ variables. We assume here that $F$ is the representation of $f$ from the proof of Theorem~\ref{thm: bnn}, namely 
    \begin{itemize}
        \item $P_f = \{x \in \mathcal{B}^{2k} \mid |x|=2k-1\}$, i.e all\footnote{The proof of Theorem~\ref{thm: bnn} in fact uses only arbitrarily chosen $k+1$ vectors of weight $2k-1$, however, we do not need a minimal representation here and taking all vectors of weight $2k-1$ does not change the represented function.} vectors of weight $2k-1$, and
        \item $N_f = \{(0,0, \ldots ,0)\}$, i.e. the only vector of weight $0$.
    \end{itemize}
    \item $H=(P_h,N_h)$ is a BNN representation of function $h$ defined as follows
    \begin{itemize}
        \item $P_h = \{p\} \cup \{p^e \mid e \in E \}$, where $p = (1,1, \ldots ,1)$ is the only vector of weight $2k$ and for every $e=(i,j) \in E$ the vector $p^e$ has weight $2k-2$ with $p^e_i = p^e_j = 0$ (all remaining $2k-2$ bits in $p^e$ are $1$).
        \item $N_h = \{x \in \mathcal{B}^{2k} \mid |x|=1\}$, i.e all vectors of weight $1$. Let us denote these vectors by $n^1, \ldots ,n^{2k}$ with $n^i_i = 1$ for every $1 \leq i \leq 2k$ (all remaining $2k-1$ bits in $n^i$ are $0$).
    \end{itemize}
\end{enumerate}
First, we shall show that $h(x)=f(x)$ holds for all vectors with weight different from $k$.
\begin{itemize}
    \item Let $x \in \mathcal{B}^{2k}$ be an arbitrary vector with $|x| \leq k-1$. Clearly, if we pick any index $i$ with $x_i = 1$ then the negative prototype $n^i$ of weight $1$ satisfies $d_H(x,n^i) \leq k-2$ (and if $x$ is the all-zero vector then  $d_H(x,n^i) = 1 \leq k-2$ for every $i$ because $k>2$ was assumed). On the other hand, all positive prototypes are at a Hamming distance at least $k-1$ from $x$ since at least that many zero-bits in $x$ have to be flipped to get a vector of weight $2k-2$ or more. Thus $h(x)=0=f(x)$.
    \item Let $x \in \mathcal{B}^{2k}$ be an arbitrary vector with $|x| \geq k+1$. In this case, we have $d_H(x,p) \leq k-1$ while all negative prototypes are at a Hamming distance at least $k$ from $x$ since at least that many one-bits in $x$ have to be flipped to get a vector of weight $1$. Thus $h(x)=1=f(x)$.
\end{itemize}
Thus $h(x)=f(x)$ holds everywhere except of the middle level $|x|=k$ of the Boolean lattice which we shall investigate now. 
\begin{itemize}
    \item Assume that there exists no independent set of size $k$ in $G$ and let $x \in \mathcal{B}^{2k}$ be an arbitrary vector with $|x|=k$. Vector $x$ has exactly $k$ coordinates with $x_i = 0$ and by our assumption, this index set cannot be an independent set of $G$. Thus there must exist $e=(i,j) \in E$ such that $x_i = x_j = 0$ and hence $d_H(x,p^e) = k-2$ since only the remaining $k-2$ zero-bits in $x$ have to be flipped to arrive to $p^e$. On the other hand, all negative prototypes are at a Hamming distance at least $k-1$ from $x$ since at least that many one-bits in $x$ have to be flipped to get a vector of weight $1$. Thus $h(x)=1=f(x)$ for every vector $x$ of weight $k$.
    \item Now assume the opposite, i.e. that there exists an independent set of size $k$ in $G$, defined by an index set $S$. Let $x^S \in \mathcal{B}^{2k}$ be a vector with $|x^S|=k$ where $S$ defines the zero-bits of $x^S$. Clearly, $d_H(x^S,p)=k$ but also $d_H(x^S,p^e) \geq k$ for every $e \in E$. To see the latter, notice that if $|e\cap S|=1$ we need to flip $k-1$ zero-bits and one one-bit in $x^S$ to arrive to $p^e$, and if $|e\cap S|=0$ we need to flip all $k$ zero-bits and two one-bits in $x^S$ to arrive to $p^e$. On the other hand, if we pick any index $i$ with $x^S_i = 1$ then the negative prototype $n^i$ of weight $1$ satisfies $d_H(x^S,n^i) = k-1$. Thus $h(x^S)=0$ while $f(x^S)=1$.
\end{itemize}
If we summarize the above observations we get that $h \equiv f$ if and only if there exists no independent set of size $k$ in $G$, or in other words the answer to the EQ query on BNN representations $F$ and $H$ is yes if and only if the answer to the input HSIS instance $G$ is no. Since both $F$ and $H$ can be clearly constructed from $G$ in polynomial time, this finishes the proof of NP-harness of the EQ query. 
\end{proof}

Theorem~\ref{thm: eq} immediately gives us the following corollary.

\begin{cor}
    The sentential entailment query is co-NP complete for the \textbf{BNN} language.
\end{cor}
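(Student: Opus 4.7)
The plan is to establish that SE for \textbf{BNN} lies in co-NP and is co-NP hard, where SE$(A,B)$ asks whether every model of $A$ is a model of $B$.

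For membership in co-NP, I would observe that a no-certificate is just a single vector $x \in \mathcal{B}^n$ with $A(x)=1$ and $B(x)=0$. Given $x$ and BNN representations $A$ and $B$, the values $A(x)$ and $B(x)$ can be computed in polynomial time by calculating Hamming distances from $x$ to every prototype in $A$ and in $B$ and comparing, so the certificate is polynomial-time verifiable.

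For hardness, the cleanest route is a Turing reduction from EQ, which is co-NP hard by Theorem~\ref{thm: eq}: since $A \equiv B$ iff $A \models B$ and $B \models A$, any polynomial-time algorithm for SE would yield a polynomial-time algorithm for EQ via two calls. Alternatively, a direct many-one reduction from HSIS can be read off from the proof of Theorem~\ref{thm: eq} itself: the constructed function $h$ satisfies $h(x) \le f(x)$ pointwise, because on all levels with $|x| \neq k$ the two functions agree, and on the middle level $|x|=k$ we have $f(x)=1$ since $f = MAJ_{2k}$. Hence $H \models F$ holds unconditionally, while $F \models H$ is equivalent to $F \equiv H$, which by the proof of Theorem~\ref{thm: eq} is equivalent to the nonexistence of an independent set of size $k$ in the input graph $G$. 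Since $F$ and $H$ are constructed from $G$ in polynomial time, this gives a polynomial many-one reduction from HSIS to SE on \textbf{BNN} inputs.

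I do not expect a real obstacle here: membership is immediate, and the reduction essentially reuses the construction already established for EQ. The only subtlety worth mentioning is the direction of the reduction — one must check that $h \le f$ pointwise so that one side of the entailment is trivial and the nontrivial side coincides with equivalence; this is exactly what the case analysis in the proof of Theorem~\ref{thm: eq} already delivers.
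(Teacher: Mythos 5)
Your proposal is correct and its primary route (co-NP membership via a single separating vector as certificate, hardness by answering an EQ query with two SE queries) is exactly the argument the paper gives. Your alternative direct many-one reduction from HSIS is a nice strengthening the paper does not spell out here, though it implicitly relies on the same observation (that the models of $H$ form a subset of the models of $F$, so $H \models F$ is automatic) which the paper only records later in the model-counting corollary.
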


\begin{proof}
    It is again easy to see that the SE query for \textbf{BNN} belongs to co-NP. Given a query $F \models H$ a certificate for a negative answer is a vector $x$ such that $F$ classifies $x$ as a positive vector while $H$ classifies $x$ as a negative vector (which can be of course checked in polynomial time). 

    The hardness part is a direct consequence of Theorem~\ref{thm: eq} since $F \equiv H$ if and only if $F \models H$ and $H \models F$ so any EQ query can be answered by asking two SE queries on the same input BNN representations. 
\end{proof}

Finally, the construction in the proof of Theorem~\ref{thm: eq} also yields the following result.

\begin{cor}
    The model counting query is NP-hard for the \textbf{BNN} language.
\end{cor}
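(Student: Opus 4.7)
The plan is to recycle the construction from the proof of Theorem~\ref{thm: eq} essentially verbatim, and extract from it a counting relation that turns a polynomial time model counter for \textbf{BNN} into a polynomial time algorithm for HSIS. Given an HSIS instance $G=(V,E)$ on $n=2k$ vertices (with $k>2$), build the BNN representation $H=(P_h,N_h)$ exactly as in the proof of Theorem~\ref{thm: eq} and let $h$ be the function it represents. The analysis already carried out there shows that $h$ coincides with $f=MAJ_{2k}$ on every vector $x$ with $|x|\neq k$, so the number of models of $h$ differs from the number of models of $f$ only by the contribution of the middle level $\{x\in\mathcal{B}^{2k}\mid |x|=k\}$.

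On that middle level, the same case analysis shows that $h(x)=0$ holds precisely when the $k$ zero-coordinates of $x$ form an independent set of size $k$ in $G$, and $h(x)=1=f(x)$ otherwise. Consequently, if we let $I(G)$ denote the number of independent sets of size exactly $k$ in $G$, then
\[
\#\mathrm{models}(h) \;=\; \#\mathrm{models}(f) \;-\; I(G) \;=\; \sum_{i=k}^{2k}\binom{2k}{i} \;-\; I(G).
\]
The quantity $\sum_{i=k}^{2k}\binom{2k}{i}$ depends only on $k$ and can be computed in polynomial time.

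Therefore, a polynomial time algorithm for the \textbf{CT} query on \textbf{BNN} would, when applied to $H$, allow us to recover $I(G)$ in polynomial time and to decide HSIS by checking whether $I(G)\geq 1$. Since $H$ is constructible from $G$ in polynomial time and HSIS is NP-complete, this establishes NP-hardness of \textbf{CT} for \textbf{BNN}. I do not foresee a real obstacle here: all the distance computations needed to justify the bijection between middle-level non-models of $h$ and $k$-element independent sets of $G$ have already been carried out in the proof of Theorem~\ref{thm: eq}, so the proof reduces to citing that analysis and stating the counting identity above.
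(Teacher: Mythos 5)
Your proof is correct and follows essentially the same route as the paper: both reduce from HSIS via the construction of Theorem~\ref{thm: eq}, compute the model count of $MAJ_{2k}$ in closed form, and observe that a polynomial-time \textbf{CT} algorithm applied to $H$ would decide the HSIS instance. The only cosmetic difference is that you use the exact identity $CT(F)-CT(H)=I(G)$ (which the paper states separately as a remark immediately after its proof), whereas the paper's proof itself only needs the weaker fact that $CT(H)=CT(F)$ iff $H\equiv F$.
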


\begin{proof}
    Notice that it is easy to count the number of models for $MAJ_{2k}$. Clearly, the number of models can be counted is half of all vectors plus half of the middle level, that is $CT(MAJ_{2k}) = CT(F) = 2^{2k-1} + \frac{1}{2} \genfrac(){0pt}{1}{2k}{k}$.     
    
    It is also obvious, that $CT(H) = CT(F)$ if and only if $H \equiv F$ (recall that the models of $H$ always form a subset of models of $F$) and so the ability to compute $CT(H)$ in polynomial time would answer the EQ query for $F$ and $H$ and thus decide the input HSIS instance $G$. 
\end{proof}

Let us remark that the reduction in the proof of Theorem~\ref{thm: eq} is number preserving in the following sense: any two distinct independent sets of size $k$ in $G$ correspond to two distinct non-models of $h$ of weight $k$. Therefore $CT(F) - CT(H)$ exactly equals the number of independent sets of size $k$ in $G$. This means that if the counting version of the HSIS problem is \#P hard (which we do not know, but it is quite likely since the counting version of the general IS problem certainly is \#P hard) then also the CT query for BNN is \#P hard. 

The complexity status of all standard queries for \textbf{BNN} and selected standard languages is summarized in Table~\ref{tab:queries}.

\begin{table}[htb]
     \centering
     %\resizebox{\columnwidth}{!}{%
        \begin{tabular}{c | c | c | c | c | c | c | c | c}
             $\mathbf{L}$ & $\mathbf{CO}$ & $\mathbf{VA}$ & $\mathbf{CE}$ & $\mathbf{IM}$ & $\mathbf{EQ}$ & $\mathbf{SE}$ & $\mathbf{CT}$ & $\mathbf{ME}$ \\\hline\hline
             $\mathbf{NNF}$ & $\circ$ & $\circ$ & $\circ$ & $\circ$ & $\circ$ & $\circ$ & $\circ$ & $\circ$ \\
             $\mathbf{d}$-$\mathbf{NNF}$ & $\circ$ & $\circ$ & $\circ$ & $\circ$ & $\circ$ & $\circ$ & $\circ$ & $\circ$ \\
             $\mathbf{DNNF}$ &  \checkmark  & $\circ$ &  \checkmark  & $\circ$ & $\circ$ & $\circ$ & $\circ$ &  \checkmark  \\       
             $\mathbf{d}$-$\mathbf{DNNF}$ & \checkmark  & \checkmark &  \checkmark  & \checkmark & ? & $\circ$ & \checkmark &  \checkmark  \\
             $\mathbf{BDD}$ & $\circ$ & $\circ$ & $\circ$ & $\circ$ & $\circ$ & $\circ$ & $\circ$ & $\circ$ \\
             $\mathbf{FBDD}$ & \checkmark  & \checkmark &  \checkmark  & \checkmark & ? & $\circ$ & \checkmark &  \checkmark  \\
             $\mathbf{OBDD}$ & \checkmark  & \checkmark &  \checkmark  & \checkmark & \checkmark & $\circ$ & \checkmark &  \checkmark  \\
             $\mathbf{CNF}$ & $\circ$ & \checkmark & $\circ$ & \checkmark & $\circ$ & $\circ$ & $\circ$ & $\circ$ \\
             $\mathbf{DNF}$ & \checkmark & $\circ$ & \checkmark & $\circ$ & $\circ$ & $\circ$ & $\circ$ & \checkmark \\
             $\mathbf{PI}$ & \checkmark  & \checkmark &  \checkmark  & \checkmark & \checkmark & \checkmark & $\circ$ &  \checkmark  \\
             $\mathbf{IP}$ & \checkmark  & \checkmark &  \checkmark  & \checkmark & \checkmark & \checkmark & $\circ$ &  \checkmark  \\
             $\mathbf{MODS}$ & \checkmark  & \checkmark &  \checkmark  & \checkmark & \checkmark & \checkmark & \checkmark &  \checkmark \\\hline
             $\mathbf{CARD}$ & $\circ$ & \checkmark  & $\circ$ & \checkmark  & $\circ$ & $\circ$ & $\circ$ & $\circ$ \\
             $\mathbf{PBC}$ & $\circ$ & \checkmark  & $\circ$ & \checkmark  & $\circ$ & $\circ$ & $\circ$ & $\circ$ \\ 
             $\mathbf{SL}$ & \checkmark  & \checkmark &  \checkmark  & \checkmark & \checkmark & \checkmark & \checkmark &  \checkmark  \\ \hline
             $\mathbf{BNN}$ & \checkmark  & \checkmark &  \checkmark  & \checkmark & $\circ$ & $\circ$ & $\circ$ & \checkmark  \\
        \end{tabular}
    %}
    \caption{Languages introduced in \cite{KCM}, \cite{pbc}, and \cite{SL} and their polynomial time queries. \checkmark means ``satisfies'', $\circ$ means ``does not satisfy unless P=NP'', and ? corresponds to an open problem.}
    \label{tab:queries}
 \end{table}

\section{Conclusions}

We have studied the properties of the \textbf{BNN} language introduced in~\cite{NNR} with respect to the Knowledge Compilation Map. We have established succinctness relations of this language to languages \textbf{BDD}, \textbf{CNF}, \textbf{DNF}, \textbf{PI}, \textbf{IP}, and \textbf{MODS}. The most interesting question that remains open are the relations of \textbf{BNN} to \textbf{OBDD} and \textbf{FBDD}. We conjecture that in both cases the languages are incomparable. Another open question is the succinctness relation of \textbf{BDD} to the languages added to the Knowledge Compilation Map in subsequent papers, namely to \textbf{CARD}, \textbf{PBC}, and \textbf{SL} languages. 

Next, we have studied the complexity status of standard transformations and queries for the \textbf{BNN} language. Although it supports a decent subset of queries in polynomial time (CO, VA, IM, CE, ME) and hence it passes the necessary condition for a target compilation language formulated in~\cite{KCM}\footnote{Here we refer to the following sentence from~\cite{KCM}: "For a language to qualify as a target compilation language we will require that it permits a polytime clausal entailment test."}, we feel that the lack of supported transformations (only negation is supported), and in particular the fact that conditioning is not supported in polynomial time, disqualifies the \textbf{BNN} from being a good target language for knowledge compilation. The open problem left for future research is the complexity status of the singleton forgetting transformation.

\section{Acknowledgement}
We are grateful to Petr Ku\v cera for several helpful suggestions and for the construction that is used in the proof of Theorem~\ref{thm: conj}.

\bibliographystyle{plain}
%[number]

\bibliography{bibliography}
\end{document}